
\NeedsTeXFormat{LaTeX2e}

\documentclass{new_tlp}
\usepackage{mathptmx}

  \title
        {Anytime Computation of Cautious Consequences in Answer Set Programming }

  \author[M. Alviano, C. Dodaro and F. Ricca]
         {MARIO ALVIANO, CARMINE DODARO and FRANCESCO RICCA\\
         Department of Mathematics and Computer Science, 
         University of Calabria, 87036 Rende (CS), Italy\\
         \email{\{alviano,dodaro,ricca\}@mat.unical.it}}


\usepackage[ruled,linesnumbered,vlined]{algorithm2e}
\usepackage{color}
\usepackage{comment}
\usepackage{tikz}
\usepackage{xspace}
\usepackage{subfigure}

\newtheorem{theorem}{Theorem}
\newtheorem{lemma}{Lemma}
\newtheorem{example}{Example}

\newcommand{\tuple}[1]{\ensuremath{\left \langle #1 \right \rangle }}

\newcommand{\Alg}[1]{$A#1$\xspace}
\newcommand{\Any}[1]{$A#1^{*}$\xspace}

\def\naf{\ensuremath{\raise.17ex\hbox{\ensuremath{\scriptstyle\mathtt{\sim}}}}\xspace}
\def\A{\ensuremath{\mathcal{A}}\xspace}
\def\t{\ensuremath{\mathbf{t}}\xspace}
\def\u{\ensuremath{\mathbf{u}}\xspace}
\def\f{\ensuremath{\mathbf{f}}\xspace}

\begin{document}

\label{firstpage}

\maketitle

  \begin{abstract}
%


Query answering in Answer Set Programming (ASP) is usually solved by computing (a subset of) the cautious consequences of a logic program. 
This task is computationally very hard, and there are programs for which computing cautious consequences is not viable in reasonable time.
However, current ASP solvers produce the (whole) set of cautious consequences only at the end of their computation.
This paper reports on strategies for computing cautious consequences, also introducing anytime algorithms able to produce sound answers during the computation.

\textbf{To appear in Theory and Practice of Logic Programming (TPLP).}
  \end{abstract}

  \begin{keywords}
    answer set programming, query answering, cautious reasoning, anytime algorithms
  \end{keywords}

\section{Introduction}\label{sec:introduction}

Answer Set Programming (ASP) is a declarative language for knowledge representation and reasoning \cite{niem-99,mare-trus-99,lifs-2002,bara-2002,gelf-kakl-2014}.
In ASP, knowledge concerning an application domain is encoded by a logic program whose semantics is given by a set of stable models \cite{gelf-lifs-91}, also referred to as answer sets.
As for other nonmonotonic formalisms, the resulting knowledge base can be queried according to two possible modes of reasoning, usually referred to as brave (or credulous) and cautious (or skeptical).
Brave reasoning provides answers to the input query that are witnessed by some stable model of the knowledge base.
For cautious reasoning, instead, answers have to be witnessed by all stable models.
Cautious reasoning over ASP knowledge bases has relevant applications in various fields ranging from  Databases to Artificial Intelligence.
Among them are consistent query answering \cite{aren-etal-2003-tplp}, data integration \cite{eite-2005-lpnmr}, and ontology-based reasoning \cite{eite-etal-2008-aij}.

A common practice in ASP is to reduce query answering to the computation of a subset of the cautious consequences of a logic program \cite{leon-etal-2002-dlv}, where cautious consequences are atoms belonging to all stable models.
As an example, in the context of Consistent Query Answering (CQA), consider an inconsistent database $D$
where in relation $R = \{\tuple{1,1,1}, \tuple{1,2,1},$ $\tuple{2,2,2}, \tuple{2,2,3}, \tuple{3,2,2}, \tuple{3,3,3} \}$ 
the second argument is required to functionally depend on the first. 
Given a query $q$ over $D$, CQA amounts to computing answers of $q$ that are true in all repairs of the original database.
Roughly, a repair is a revision of the original database that is maximal and satisfies its integrity constraints.
In the example, repairs can be modeled by the following ASP rules:
\begin{eqnarray}
 R_{out}(X,Y_1,Z_1) & \leftarrow & R(X,Y_1,Z_1),\ R(X,Y_2,Z_2),\ Y_1 \neq Y_2,\ \naf R_{out}(X,Y_2,Z_2)\label{eq:rout}\\
 R_{in}(X,Y,Z) & \leftarrow & R(X,Y,Z),\ \naf R_{out}(X,Y,Z)\label{eq:rin}
\end{eqnarray}
where (\ref{eq:rout}) detects inconsistent pairs of tuples and guesses tuples to remove in order to restore consistency, while (\ref{eq:rin}) defines the repaired relation as the set of tuples that have not been removed.
The first and third arguments of $R$ can thus be retrieved by means of the following query rule:
\begin{equation}
 Q(X,Z) \leftarrow R_{in}(X,Y,Z)
\end{equation}
whose consistent answers are tuples of the form $\tuple{x,z}$ such that $Q(x,z)$ belongs to all stable models.
In this case the answer is $\{\tuple{1,1}, \tuple{2,2}, \tuple{2,3}\}$.


Cautious reasoning has been implemented by two ASP solvers, namely DLV \cite{leon-etal-2002-dlv} and clasp \cite{gebs-etal-2012-aij}, as a variant of their stable model search algorithms. 
In a nutshell, DLV and clasp compute stable models of a given program by means of a two-phase process.
The first phase is actually implemented by a possibly external instantiator producing a ground version of the input program.
The ground program is then processed by the second phase, which actually searches for stable models. 
Cautious reasoning can be obtained by reiterating the stable model search step according to a specific solving strategy. 
The procedure implemented by DLV searches for stable models and computes their intersection, which eventually results in the set of cautious consequences of the input program.
At each step of the computation, the intersection of the identified stable models represents an overestimate of the solution, which however is not provided as output by DLV.
The procedure implemented by clasp is similar, but the overestimate is outputted and used to further constrain the computation.
In fact, the overestimate is considered a constraint of the logic program, so that the next computed stable model is guaranteed to improve the current overestimate.

It is important to note that cautious reasoning is a resource demanding task, which is often not affordable to complete in reasonable time.
As a matter of fact, in these cases current ASP solvers do not produce any sound cautious consequence, as also the overestimate produced by clasp can only guarantee that some atoms do not belong to the solution.
It is interesting to observe that query answering is addressed differently in other logic programming languages.
For example, Prolog queries having infinitely many answers are common due to the presence of uninterpreted function symbols.
Prolog systems are thus designed to produce underestimates of the complete, possibly infinite solution, which actually represent sound answers to the input query.
In fact, underestimates are useful in practice, especially 
in the cases in which waiting for termination is not affordable, and this may be the case even if termination is guaranteed.
It is thus natural to ask whether underestimates can be computed also in the context of ASP.

The paper provides insights in this respect, showing that 
underestimates can actually be obtained by improving algorithms employed by ASP systems, or adapting to ASP the \emph{iterative consistency testing}
algorithm for computing backbones of propositional theories \cite{DBLP:conf/ecai/Marques-SilvaJL10}.
The paper also introduces a modified version of this last algorithm that takes advantage of restarts and heuristic values 
for faster improvement of underestimates.
An interesting aspect of the algorithms analyzed in this paper is that underestimates are produced during the computation of the complete solution.
The computation can thus be stopped either when a sufficient number of cautious consequences have been produced, or when no new answer is produced after a specified amount of time.
Such algorithms are referred to as anytime in the literature.
The empirical comparison of these algorithms highlights that they could be combined in a parallel implementation to converge faster to the complete solution.
Actually, a proof-of-concept implementation of the parallel approach is also presented in the paper to confirm this conjecture.

\section{Preliminaries}\label{sec:pre}

Syntax and semantics of propositional ASP programs are briefly introduced in this section.
A quick overview of the main steps of stable model search is also reported in order to provide the reader with essential knowledge on a process that is used but not substantially modified by the algorithms analyzed in this paper.
(For complementary introductory material on ASP see \citeNP{gelf-lifs-91,bara-2002,gelf-kakl-2014}.)

\smallskip
\noindent {\em Syntax.}
A normal logic program consists of a set of rules of the following form:
\begin{equation}\label{eq:rule}
a_0 \leftarrow a_1, \ldots, a_m, \naf a_{m+1}, \ldots, \naf a_n
\end{equation}
where each $a_i$ ($i = 0,\ldots,n$) is a propositional atom in a fixed, countable set \A, \naf denotes \emph{negation as failure}, and $n \geq m \geq 0$.
For a rule $r$ of the form (\ref{eq:rule}), atom $a_0$ is called \emph{head} of $r$, denoted $H(r)$;
conjunction $a_1, \ldots, a_m, \naf a_{m+1}, \ldots, \naf a_n$ is named \emph{body} of $r$; 
sets $\{a_1, \ldots, a_m\}$ and $\{a_{m+1}, \ldots, a_n\}$ are denoted $B^+(r)$ and $B^-(r)$, respectively.
A \emph{constraint} is a rule of the form (\ref{eq:rule}) such that $a_0 = \bot$, where $\bot$ is a fixed atom in \A. 

\smallskip
\noindent {\em Semantics.}
An interpretation $I$ is a subset of $\A \setminus \{\bot\}$.
$I$ is a model of a rule $r$, denoted $I \models r$, if $H(r) \in I$ whenever $B^+(r) \subseteq I$ and $B^-(r) \cap I = \emptyset$.
It is a model of a program $P$, denoted $I \models P$, if it is a model of all rules in $P$.
The definition of stable model is based on a notion of program reduct \cite{gelf-lifs-91}:
Let $P$ be a normal logic program, and $I$ an interpretation.
The reduct of $P$ w.r.t.\ $I$, denoted $P^I$, is obtained from $P$ by deleting each rule $r$ such that $B^-(r) \cap I \neq \emptyset$, and removing negated atoms in the remaining rules.
An interpretation $I$ is a stable model of $P$ if $I \models P^I$ and there is no $J \subset I$ such that $J \models P^I$.
Let $SM(P)$ denote the set of stable models of $P$.
If $SM(P) \neq \emptyset$ then $P$ is coherent. 
An atom $a \in \A$ is a cautious consequence of a program $P$ if $a$ belongs to all stable models of $P$.
The set of cautious consequences of $P$ is denoted $CC(P)$.

\begin{example}
The following is a (ground) program equivalent to the one reported in Section~\ref{sec:introduction}:

\smallskip
\noindent
 \begin{minipage}{.31\textwidth}
    $R_{out}\!(1,1,1) \leftarrow \naf R_{out}\!(1,2,1)$;\\
    $R_{out}\!(1,2,1) \leftarrow \naf R_{out}\!(1,1,1)$;\\
    $R_{out}\!(3,2,2) \leftarrow \naf R_{out}\!(3,3,3)$;\\
    $R_{out}\!(3,3,3) \leftarrow \naf R_{out}\!(3,2,2)$;
 \end{minipage}%
 \begin{minipage}{.3\textwidth}
    $R_{in}\!(1,1,1) \leftarrow \naf R_{out}\!(1,1,1)$;\\
    $R_{in}\!(1,2,1) \leftarrow \naf R_{out}\!(1,2,1)$;\\
    $R_{in}\!(3,2,2) \leftarrow \naf R_{out}\!(3,2,2)$;\\
    $R_{in}\!(3,3,3) \leftarrow \naf R_{out}\!(3,3,3)$;
 \end{minipage}%
 \begin{minipage}{.24\textwidth}
    $Q(1,1) \leftarrow R_{in}\!(1,1,1)$;\\
    $Q(1,1) \leftarrow R_{in}\!(1,2,1)$;\\
    $Q(3,2) \leftarrow R_{in}\!(3,2,2)$;\\
    $Q(3,3) \leftarrow R_{in}\!(3,3,3)$;
 \end{minipage}
 \begin{minipage}{.165\textwidth}
    $R_{in}\!(2,2,2) \leftarrow$;\\
    $R_{in}\!(2,2,3) \leftarrow$;\\
    $Q(2,2) \leftarrow$;\\
    $Q(2,3) \leftarrow$.
 \end{minipage}

\medskip
\noindent The program above has four stable models:
\begin{enumerate}
\item $I \cup \{R_{out}(1,1,1),$ $R_{out}(3,2,2),$ $R_{in}(1,2,1),$ $R_{in}(3,3,3),$ $Q(3,3)\}$;
\item $I \cup \{R_{out}(1,2,1),$ $R_{out}(3,2,2),$ $R_{in}(1,1,1),$ $R_{in}(3,3,3),$ $Q(3,3)\}$;
\item $I \cup \{R_{out}(1,1,1),$ $R_{out}(3,3,3),$ $R_{in}(1,2,1),$ $R_{in}(3,2,2),$ $Q(3,2)\}$;
\item $I \cup \{R_{out}(1,2,1),$ $R_{out}(3,3,3),$ $R_{in}(1,1,1),$ $R_{in}(3,2,2),$ $Q(3,2)\}$;
\end{enumerate}
where $I = \{R_{in}(2,2,2),$ $R_{in}(2,2,3),$ $Q(1,1), Q(2,2),$ $Q(2,3)\}$ is the set of cautious consequences.
\end{example}

\smallskip
\noindent {\em Stable model search.}
Given a normal ASP program $P$, its stable models can be computed by means of an algorithm similar to the DPLL backtracking search algorithm \cite{davi-etal-62} adopted by SAT solvers.
In this algorithm, atoms are associated with a truth value among true, false and undefined. 
Moreover, atoms are associated with a nonnegative integer called \emph{level}.
Actually, the backtracking search is usually preceded by \emph{simplification} techniques \cite{DBLP:conf/sat/EenB05}.
Simplifications include polynomial time algorithms that
(i) identify atoms whose truth value is deterministically implied by the input program, 
(ii) strengthen and remove rules, and 
(iii) eliminate atoms by means of rule rewriting.
Then, the nondeterministic search starts choosing \emph{branching atoms} according to some heuristic, propagating consequences of these choices until either a stable model is found, or a conflict is detected.
The level associated with an atom $a$ is the depth of the search tree in which $a$ has been either chosen or determined, where atoms assigned by (i) have level 0. 
The propagation step is polynomial, and corresponds to unit propagation in SAT solvers.
When a conflict is found, previous choices and their consequences are unrolled until consistency is restored (backjumping; \citeNP{gasc-79}).
Modern solvers analyze conflicts in order to learn constraints that are implicit in the original program and inhibit future explorations of the same (conflictual) branch of the search tree.
This learning step corresponds to clause learning in SAT \cite{zhan-etal-2001}, and is usually complemented with heuristic techniques that control the number of learned constraints, 
and possibly restart the computation in order to explore different branches of the search tree.
Restart policies are based on specific sequences of thresholds that guarantee termination of the algorithm \cite{gome-etal-98,luby-etal-93}.

\section{Computation of Cautious Consequences}\label{sec:algorithms}

\begin{figure}[t]
 \begin{minipage}{\textwidth}
    \begin{algorithm}[H]\small
     \SetKwInOut{Input}{Input}\SetKwInOut{Output}{Output}
     \Input{a program $P$ and a set of atoms $Q$}
     \Output{atoms in $Q$ that are cautious consequences of $P$, or $\bot$}

        $U := \emptyset;\quad O := Q;\quad L := \emptyset$\;
        $I$ := ComputeStableModel($P$, $L$, \A)\;
        \If{$I = \bot$}{
            \Return{$\bot$\;}
        }
        $O := O \cap I$\;
        \While{$U \neq O$}{
            \tcp{EnumerationOfModels or other procedure}
        }
        \Return{$U$}\;
     
     \caption{CautiousReasoning}\label{alg:cautious}
    \end{algorithm}
 \end{minipage}
 \begin{minipage}{.5\textwidth}
    \begin{procedure}[H]\small
        $P$ := $P \cup {}$ Constraint($I$)\;
        $I$ := ComputeStableModel($P$, $L$, \A)\;
        \uIf{$I = \bot$}{
            $U := O$\;
        }
        \Else{
            \phantom{\;}
            $O := O \cap I$\;
        }
     \caption{EnumerationOfModels() \hfill (A1)}\label{alg:enum}
    \end{procedure}
 \end{minipage}%
 \begin{minipage}{.5\textwidth}
    \begin{procedure}[H]\small
        $P$ := $P \cup {}$ Constraint($O$)\;
        $I$ := ComputeStableModel($P$, $L$, \A)\;
        \uIf{$I = \bot$}{
            $U := O$\;
        }
        \Else{
            $P$ := $P \setminus {}$ Constraint($O$)\;
            $O := O \cap I$\;
        }
     \caption{OverestimateReduction() \hfill (A2)}
    \end{procedure}
 \end{minipage}
 \begin{minipage}{.5\textwidth}
    \begin{procedure}[H]\small
        $a := $ OneOf($O \setminus U$)\;
        $I$ := ComputeStableModel($P$, $L$, $\{a\}$)\;
        \uIf{$I = \bot$}{
            $U := U \cup \{a\}$\;
        }
        \Else{
            $O := O \cap I$\;
        }
     \caption{IterativeCoherenceTesting() \hfill (A3)}
    \end{procedure}
 \end{minipage}%
 \begin{minipage}{.5\textwidth}
    \begin{procedure}[H]\small
        $a := $ OneOf($O \setminus U$)\;
        $I$ := ComputeUpToNextRestart($P$, $L$, $\{a\}$)\;
        \uIf{$I = \bot$}{
            $U := U \cup \{a\}$\;
        }
        \ElseIf{$I \neq RESTART$}{
            $O := O \cap I$\;
        }
     \caption{IterativePartialCoherTest()\phantom{g} \hfill (A4)}
    \end{procedure}
 \end{minipage}
 \begin{minipage}{\textwidth}
    \begin{function}[H]\small
     \SetKwInOut{Global}{Global variables}
     \Global{the underestimate $U$}
        \Repeat{$I \neq RESTART$}{
            $U := U \cup \{a \in Q \mid L \mbox{ contains } \bot \leftarrow \naf a \}$\;
            $I$ := ComputeUpToNextRestart($P$, $L$, OneOf($C$))\;
        }
        \Return{$I$}\;
     
     \caption{ComputeStableModel$^*$($P$: \textbf{program}, $L$: \textbf{learned constraints},  $C$: \textbf{set of atoms})}\label{proc:ubInc}
    \end{function}
 \end{minipage}
\end{figure}

Several strategies for computing cautious consequences of a given program are reported in this section.
Some of these strategies aim at solving the problem producing overestimates of the solution, which are improved and eventually result in the set of cautious consequences of the input program.
Among them are the algorithms implemented by the ASP solvers DLV \cite{alvi-etal-2011-dl2} and clasp \cite{gebs-etal-2012-aij}, respectively called \emph{enumeration of models} and \emph{overestimate reduction} in the following.
Other strategies can in addition produce sound answers during the computation of the complete solution, thus providing underestimates also when termination is not affordable in reasonable time.
One of these strategies is \emph{iterative coherence testing}, an adaptation of an algorithm computing backbones of propositional formulas \cite{DBLP:conf/ecai/Marques-SilvaJL10}.
To the best of our knowledge, no previous attempt to bring such an algorithm in ASP is reported in the literature.
A variant of this algorithm, namely \emph{iterative partial coherence testing}, is also introduced here.
Finally, a strategy for obtaining underestimates from enumeration of models and overestimate reduction is presented, which can also be used to improve the other algorithms.
More in detail, the algorithms considered here have a common skeleton, reported as Algorithm~\ref{alg:cautious}.
They receive as input a program $P$ and a set of atoms $Q$ representing answer candidates of a query, and produce as output either the largest subset of $Q$ that only contains cautious consequences of $P$, in case $P$ is coherent, or $\bot$ when $P$ is incoherent.
Initially, the underestimate $U$ and the overestimate $O$ are set to $\emptyset$ and $Q$, respectively (line~1).
A coherence test of $P$ is then performed (lines~2--4) by calling function ComputeStableModel, which actually implements stable model search as described in Section~\ref{sec:pre}.
(To simplify the presentation, branching atoms are assumed to be assigned the false truth value.)
The first argument of the function is a program $P$.
The second argument is a set of learned constraints, which is initially empty.
The third argument is a set $C$ of atoms used to restrict branching atoms of level 1. 
The function returns either $I$ in case a stable model $I$ of $P$ is found, or $\bot$ otherwise.
Note that $\bot$ is returned not only when $P$ is incoherent, but in general when each stable model $M$ of $P$ is such that $C \subseteq M$.
Similarly, when $I$ is returned, stable model $I$ satisfies $C \not\subseteq I$.
When $C = \A$, the condition $C \not\subseteq I$ is trivially satisfied because $I \subseteq \A \setminus \{\bot\}$ by definition of interpretation.
When $C = \{a\}$ for some atom $a$, instead, this function results in an incremental stable model search in which $a$ is forced to be false \cite{DBLP:journals/entcs/EenS03}.

The first stable model found improves the overestimate (line~5).
At this point, estimates are improved according to different strategies until they are equal (line~6).
EnumerationOfModels adds to $P$ a constraint that eliminates the last stable model found (line~1).
In fact, function Constraint($\{a_1, \ldots, a_n\}$) returns a singleton of the form $\{\bot \leftarrow a_1, \ldots, a_n\}$.
The algorithm then searches for a new stable model (line~2) to improve the overestimate (line~6).
If no new stable model exists, the underestimate is set equal to the overestimate (lines~3--4), thus terminating the computation.
OverestimateReduction is similar, but the constraint added is obtained from the current overestimate (line~1).
In this way, when a new stable model is found, an improvement of the overestimate is guaranteed, and the constraint can be reduced accordingly (lines~6 and 1).

The strategy implemented by IterativeCoherenceTesting can also improve the underestimate many times during its computation.
In fact, one cautious consequence candidate is selected by calling function OneOf (line~1).
This candidate is then constrained to be false and a stable model is searched (line~2).
If none is found then the underestimate can be increased (lines~3--4).
Otherwise, the overestimate can be improved (lines~5--6).
IterativePartialCoherenceTesting is similar, but forces falsity of a candidate only up to the next restart (lines~1--2).
In fact, ComputeStableModel is replaced by ComputeUpToNextRestart, a function that searches for a stable model but also terminates when a restart occurs, in which case it returns the value $RESTART$.
In this way, the algorithm can select the most promising candidate after each restart.

Variants of these four algorithms can be obtained by replacing function ComputeStableModel with function ComputeStableModel$^*$, which actually implements stable model search, but also improves the current underestimate after each restart (line~2).

\begin{theorem}\label{thm:main}
Let $P$ be a program and $Q \subseteq \A$ a set of atoms.
CautiousReasoning($P$,$Q$) terminates after finitely many steps and returns $Q \cap CC(P)$ if $P$ is coherent;
otherwise, it returns $\bot$.
Moreover, $U \subseteq Q \cap CC(P) \subseteq O$ holds at each step of computation.
The claim holds for all variants of Algorithm~1.
\end{theorem}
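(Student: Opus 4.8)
The plan is to prove, in this order, (a) that the loop invariant $U \subseteq Q \cap CC(P) \subseteq O$ holds throughout every execution, (b) partial correctness of the returned value, and (c) termination; part (a) is the heart of the matter and also underpins the anytime guarantee, and (b) will follow almost at once from it.

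I would prove (a) by induction on the number of elementary operations executed. The base case is lines~1--5 of Algorithm~\ref{alg:cautious}: afterwards $U=\emptyset$ and $O = Q \cap I$ for the first stable model $I$, and since $CC(P) \subseteq I$ for every $I \in SM(P)$ this gives $Q \cap CC(P) \subseteq O \subseteq Q$. For the inductive step I would check the procedures A1--A4 and then the replacements of ComputeStableModel by ComputeStableModel$^*$, using a few facts about ComputeStableModel stated in the excerpt: every stable model of $P$ augmented with constraints of the form returned by Constraint$(\cdot)$ is still a stable model of $P$, so each assignment $O := O \cap I$ intersects $O$ with a superset of $CC(P)$ and preserves $Q \cap CC(P) \subseteq O$; when ComputeStableModel is invoked with $C=\{a\}$ and returns $\bot$ then $a$ belongs to every $M \in SM(P)$, i.e.\ $a \in CC(P)$, and the chosen $a$ lies in $O \setminus U \subseteq Q$, so $U := U \cup \{a\}$ keeps $U \subseteq Q \cap CC(P)$; and when the accumulated model-eliminating constraint (A1) or overestimate constraint (A2) forces ComputeStableModel to return $\bot$, all stable models have been accounted for --- in A1 the found ones are exactly $SM(P)$, in A2 every stable model contains the current $O$ --- so $O$ already equals $Q \cap CC(P)$ and the assignment $U := O$ is sound.

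The delicate case is ComputeStableModel$^*$, which after every restart adds $\{a \in Q \mid (\bot \leftarrow \naf a) \in L\}$ to $U$: I must show that such a learned unit really certifies $a \in CC(P)$. For iterative (partial) coherence testing this is clean, because there the argument $C$ only biases branching, the program being searched is $P$ itself, and a learned constraint is a logical consequence of the searched program, hence satisfied by every stable model of $P$. For enumeration of models and overestimate reduction the searched program is $P$ extended with constraints, so an additional observation is needed --- namely that those constraints, whose bodies contain only positive literals, cannot by themselves make an atom a positive consequence --- before concluding that $(\bot \leftarrow \naf a) \in L$ still forces $a \in CC(P)$. I expect this point, together with making precise what ``at each step of computation'' means, to be the main obstacle.

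Given the invariant, partial correctness is short: if $P$ is incoherent the first call to ComputeStableModel returns $\bot$ and the algorithm returns $\bot$ at lines~3--4; if $P$ is coherent the while loop is left only when $U=O$, whence the invariant forces $U = Q \cap CC(P) = O$ and the output $U$ equals $Q \cap CC(P)$. For termination I would exhibit a strictly decreasing measure per variant: A1 strictly decreases the finite number of stable models of the current program, reaching zero; A2 strictly decreases $|O|$; A3 strictly decreases $|O \setminus U|$, since each iteration either enlarges $U$ within $O$ (and $U \subseteq CC(P) \subseteq I$, so $U$ survives the ensuing $O := O \cap I$) or removes the chosen atom from $O$; and A4 makes the same progress as A3 whenever ComputeUpToNextRestart completes a search, which the restart policy guarantees after finitely many restarts because the thresholds grow without bound --- the same argument also shows every call to ComputeStableModel$^*$ halts. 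Combining (a), (b) and (c) yields the claim for all variants of Algorithm~\ref{alg:cautious}.
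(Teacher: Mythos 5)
Your proposal follows essentially the same route as the paper's appendix: the invariant $U \subseteq Q \cap CC(P) \subseteq O$ is established by induction over the elementary steps (the paper's Lemmas~\ref{lem:seq}--\ref{lem:under}), using exactly the same case analysis --- each $O := O \cap I$ intersects with a stable model of the original $P$ because added constraints only remove stable models; a $\bot$ answer under $C=\{a\}$ certifies $a \in CC(P)$ for A3/A4; and for the terminal $\bot$ of A1/A2 your ``all stable models are accounted for'' argument is the contrapositive form of the paper's proof by contradiction in Lemma~\ref{lem:under}. Your termination measures (finitely many stable models for A1, $|O|$ for A2, $|O\setminus U|$ for A3, plus the restart-policy caveat for A4) coincide with Lemma~\ref{lem:terminate}, and the derivation of the return value from the invariant is the same.

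The one step that would fail is the ``additional observation'' you propose for ComputeStableModel$^*$ under A1/A2, namely that constraints with purely positive bodies ``cannot by themselves make an atom a positive consequence.'' This is false: adding a constraint removes stable models, and removing stable models can only enlarge the set of cautious consequences. For $P = \{a \leftarrow \naf b;\ b \leftarrow \naf a\}$ the constraint $\bot \leftarrow a$ (which is exactly what Constraint($\{a\}$) produces after finding the stable model $\{a\}$) leaves only the stable model $\{b\}$, so $\bot \leftarrow \naf b$ is entailed by the extended program even though $b \notin CC(P)$. So a learned unit in $L$ certifies membership in all stable models of the \emph{current} program $P_i$, not of $P$, and your intended patch does not close that gap. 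You are right that this is the delicate point: the paper itself dispatches it in a single sentence (Lemma~\ref{lem:anytime}: learned constraints are implicit in the program stored in $P$), which is immediately sufficient only for A3/A4, where $P$ is never modified and hence every learned constraint is a consequence of the original program. For the starred A1/A2 variants one needs an argument tied to what conflict analysis can actually derive, or one must restrict the atoms promoted to $U$ to those still in $O$; your write-up correctly flags the obstacle but the specific observation you offer to overcome it is not valid.
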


\section{Implementation and Experiment}

We implemented the algorithms introduced in the previous section in order to analyze their performances.
Details on the implementation, on the tested benchmarks, and on the obtained results are reported in this section.

\subsection{Implementation}
Algorithms described in Section~\ref{sec:algorithms} are implemented in an experimental branch of the ASP solver WASP \cite{DBLP:conf/lpnmr/AlvianoDFLR13}, distributed under the Apache 2.0 license.
Source codes can be downloaded from the branch \emph{queries} of the public GIT repository \emph{https://github.com/alviano/wasp.git}.

WASP implements ASP solving with backjumping \cite{gasc-79}, learning \cite{zhan-etal-2001} and restarts \cite{gome-etal-98}. 
More in detail, the branch of WASP used in this experiment implements support propagation via program completion \cite{lier-mara-2004-lpnmr}, branching heuristics and deletion strategy inspired by MiniSAT \cite{een-etal-2003}, and simplifications via subsumption and atom elimination techniques as described by \citeN{DBLP:conf/sat/EenB05}.
(Actually, atom elimination, called variable elimination in SAT, is not applied on atoms involved in queries.)
In the following, $A2$, $A3$, $A4$ will denote WASP running Algorithm~1 with procedures OverestimateReduction, IterativeCoherenceTesting, and IterativePartialCoherenceTesting, respectively.
$A2^{*}$, $A3^{*}$, $A4^{*}$ will instead denote the variants using procedure ComputeStableModel$^*$.
Procedure EnumerationOfModels is not considered in the analysis since it is significantly outperformed by the other strategies in general.

We also implemented a proof-of-concept prototype of a parallel system, in the following referred to as $multi$.
It consists of a master controller implemented in Python that coordinates the execution and the exchange of information of two instances of WASP.
In particular, estimates and learned constraints of size at most two are exchanged.
In our experiment, $multi$ runs $A2^{*}$ and $A4^{*}$, but $A4^*$ in this case does not perform the first coherence check (lines~2--4 of Algorithm~1) in order to avoid a redundant computation. Other combinations of algorithms are possible, but not considered in our analysis. 
Results for {\em multi} average real time over three runs. 

\subsection{Benchmark settings}

We compared the implemented algorithms on three benchmarks,
corresponding to different applications of cautious reasoning,
briefly described below. 

\smallskip
\noindent
\emph{Multi-Context Systems Querying (MCS).}
Multi-context systems \cite{DBLP:conf/aaai/BrewkaE07} are a formalism for interlinking heterogeneous knowledge bases, 
called contexts, using bridge rules that model the flow of information among contexts. 
Testcases in this benchmark are roughly those of the third ASP competition \cite{DBLP:journals/tplp/CalimeriIR14}, where each context is modeled by a normal logic program under the stable model semantics. 
We actually made the testcases harder by requiring the computation of all pairs of the form $\tuple{c,a}$ such that atom $a$ is true in context $c$, while in the original testcases a single pair of that form was involved in the query.
The benchmark contains 53 of the 73 instances submitted to the third ASP competition.
We in fact excluded instances corresponding to incoherent theories, which are solved by the first coherence test in around 6 seconds on the average, and always in less than 14 seconds.

\smallskip
\noindent
\emph{Consistent Query Answering (CQA) }
is a well-known application of ASP \cite{aren-etal-2003-tplp,DBLP:journals/tplp/MannaRT13} described in Section~\ref{sec:introduction}.
We considered the benchmark proposed by \citeN{DBLP:journals/pvldb/KolaitisPT13},
and in particular query {\em Q3} encoded according to the rewritings by \citeANP{DBLP:journals/tplp/MannaRT13}.
The benchmark contains 13 randomly-generated databases of increasing size ranging from 1000 to 7000 tuples per relation.
Each relation contains around 30\% of primary key violations.

\smallskip
\noindent
\emph{SAT Backbones (SBB).}
The backbone of a propositional formula $\varphi$ is the set of literals that are true in all models of $\varphi$.
When $\varphi$ is a set of clauses over variables $v_1,\ldots,v_n$ ($n \geq 1$), satisfiability of $\varphi$ can be modeled in ASP by rules
$t_i \leftarrow \naf f_i$ and
$f_i \leftarrow \naf t_i$
($i = 1, \ldots, n$),
and introducing a constraint for each clause in $\varphi$.
Backbone computation thus corresponds to the computation of cautious consequences of an ASP program.
The benchmark contains 20 industrial instances used in the SAT Challenge 2012 \cite{DBLP:journals/aim/JarvisaloBRS12}.

The experiment was run on a Mac Pro equipped with two 3 GHz Intel Xeon X5365 (quad core) processors, 
with 4 MB of L2 cache and 16 GB of RAM, running Debian Linux 7.3 (kernel ver. 3.2.0-4-amd64). 
Binaries were generated with the GNU C++ compiler 4.7.3-4 shipped by Debian.
The parallel controller was interpreted by Python 3.3.2.
Time and memory limits were set to 600 seconds and 8 GB, respectively.
Performance was measured using the tool RunLim ({\em http://fmv.jku.at/runlim/}).
All instances were grounded by gringo 3.0.5 \cite{DBLP:conf/lpnmr/GebserKKS11}, whose execution time is not included in our analysis because our focus is on propositional programs.
We however report that the grounding time was often less than 1 second, with a peak of around 5 seconds for the largest 10 instances of MCS.

\subsection{Discussion of the results}
The performance of the algorithms for computing cautious consequences introduced in Section~\ref{sec:algorithms} can be studied from several perspectives. 
On the one hand, we want to know which solution performs better and in which cases.
On the other hand, we are interested in analyzing the rate at which each algorithm produces sound answers.

\begin{table}[b]
	\caption{Average running time and number of solved instances} \vspace{0.1cm}
	\label{tab:results}
	\begin{tabular*}{0.89\columnwidth}{lcc|ccc|ccc|ccc|ccc}
	\multicolumn{3}{c}{} 		 			&\multicolumn{3}{|c}{\bf{A2*}} 			&\multicolumn{3}{|c}{\bf{A3*}} 				&\multicolumn{3}{|c}{\bf{A4*}}				& \multicolumn{3}{|c}{\bf{multi}}				\\
	 			& \bf{\#} 		& \bf{\#$_{all}$} 	& \bf{sol.} 	& \bf{t} 			&\bf{t$_{all}$} 	&\bf{sol.} 		& \bf{t}			& \bf{t$_{all}$} 	&\bf{sol.} 		& \bf{t}	 		& \bf{t$_{all}$} 	& \bf{sol.} 		& \bf{t} 	 		& \bf{t$_{all}$} 	\\
\cline{1-15}
\bf{MCS}			& 53 		& 23 		& 23 	& 181.9 		& 181.9 		& 39 		& 254.1 		& 102.6 		& 40 		& 261.6 		& 102.4 		& 40 		& 177.9 		& \phantom{0}69.7 		\\
\bf{CQA} 			& 13 		& 12 		& 12	 	& 118.8 		& 118.8 		& 13 		& \phantom{0}89.5 		& \phantom{0}62.1 		& 13 		& \phantom{0}89.2 		& \phantom{0}61.8 		& 13 		& 137.2 		& \phantom{0}51.0 		\\
\bf{SBB}			& 20 		& 14 		& 15 	& \phantom{0}53.4 		& \phantom{0}41.9 		& 14 		& \phantom{0}65.7 		& \phantom{0}65.7 		& 14 		& \phantom{0}52.1 		& \phantom{0}52.1 		& 17 		& \phantom{0}98.8 		& \phantom{0}51.2 		\\
\cline{1-15}
\bf{Total} 		& \bf{86} 	& \bf{49} 		& \bf{50} 	& \bf{118.0}	& \bf{114.2} 	& \bf{66} 	& \bf{136.4}	& \bf{\phantom{0}76.8}	& \bf{67}	& \bf{134.3}	& \bf{\phantom{0}72.1}	& \bf{70}	& \bf{138.0}	& \bf{\phantom{0}57.3}	\\
    \end{tabular*}
\end{table}

\smallskip 
\noindent {\em Overall performance.}
Table~\ref{tab:results} summarizes the number of solved instances and average running times. 
In particular, the first two columns report the total number of instances (\#) and the number of instances that are solved by all solvers (\#$_{all}$), respectively; the remaining columns report the number of solved instances within the time-out (sol.), the average running times on solved instances (t) and on instances solved by all algorithms (t$_{all}$). 
Comparing the single-process approaches, we note that \Any{4} solves more instances
and is also the fastest on average in the instances solved by all algorithms, even if the performance of \Any{3} is comparable.
\Any{4} and \Any{3} outperform \Any{2} in MCS, and are faster also in CQA.
On the other hand, \Any{2} performs well in SBB, solving one instance more than 
\Any{3} and \Any{4}, and being faster on the average. 
Note that, as expected, if one considers both the number of solved instances and running time, 
\Alg{2}, \Alg{3}, and \Alg{4} perform as \Any{2}, \Any{3}, and \Any{4},  respectively.
Concerning {\em multi}, it provides in general the best performance. 

\begin{figure}
 \begin{minipage}{\textwidth}
  \centering\includegraphics[trim = 0mm 14mm 0mm 0mm, clip, width=.9\textwidth]{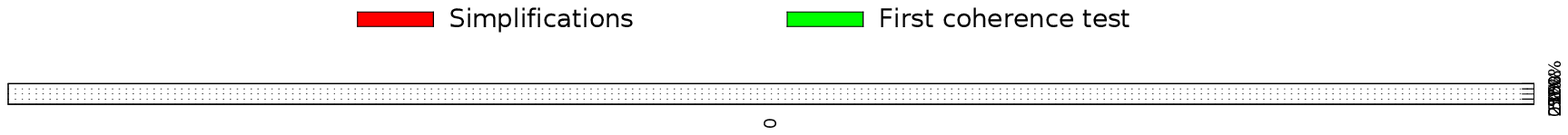}
 \end{minipage}
 \subfigure[Sound answers]{\label{fig:simp-sound}
  \includegraphics[trim = 4mm 0mm 8mm 0mm, clip, angle=-90, width=.5\textwidth]{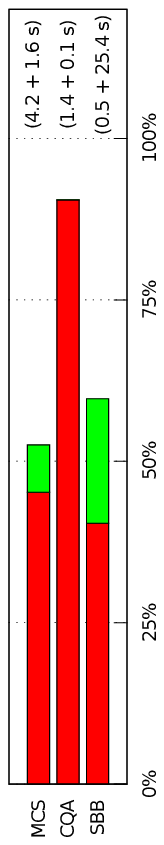}
 }%
 \subfigure[Candidates reduction]{\label{fig:simp-cand}
  \includegraphics[trim = 4mm 0mm 8mm 0mm, clip, angle=-90, width=.5\textwidth]{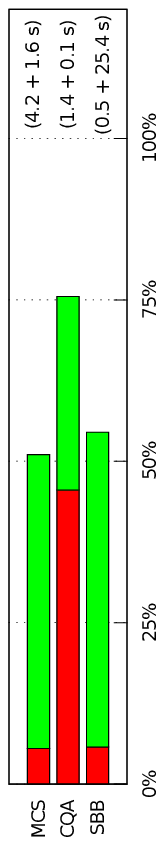}
 }
 \caption{Sound answers and candidates reduction from simplifications and from first coherence test}\label{fig:bar-simp}
\end{figure}

\smallskip 
\noindent {\em Detailed analysis.}
An important feature of the algorithms analyzed in this paper is the ability to produce both sound answers and overestimates during the computation. 
%
Figure~\ref{fig:bar-simp} reports, for each benchmark, the average percentage of (a) sound answers produced and (b) candidates reduction within the initial steps of the computation.
In particular, we plot the effects of simplifications and of the first coherence test.
The improvement of the overestimate reported in Figure~\ref{fig:simp-cand} is significant. The first steps of the computation are able to reduce the number of candidates of at least 51\% (in MCS) up to around 75\% (in CQA). Simplifications are already very effective in CQA, where candidates are reduced of around 45\%. 
%
It is important to note that the reduction of candidates at this stage applies to all algorithms, while sound answers are produced only by anytime algorithms. 
This is effective in practice, as shown in Figure~\ref{fig:simp-sound}.
Indeed, anytime algorithms print from 40\% (in SBB) to 90\% (in CQA) of sound answers already after simplifications, which requires few seconds on the average. 
The first coherence test further improves the underestimate, which ranges from 52\% (in MCS) to around 91\% (in CQA).
However, we observe that the first coherence test may require some time (25s on the average for SBB instances, with a peak of 193s), which motivated the starred variants.
In fact, starred variants can produce underestimates at each restart, not only when a coherence test is completed. 
Actually, \Any{2}, \Any{3} and \Any{4} improve progressively the underestimate up to around an additional 30\% before the first stable model is found, which is desirable on hard instances.

\begin{figure}
 \centering
 \subfigure[Basic algorithms on instance 34 of MCS]
   {\label{fig:mcsScissor}\includegraphics[width=4.2cm,angle=-90]{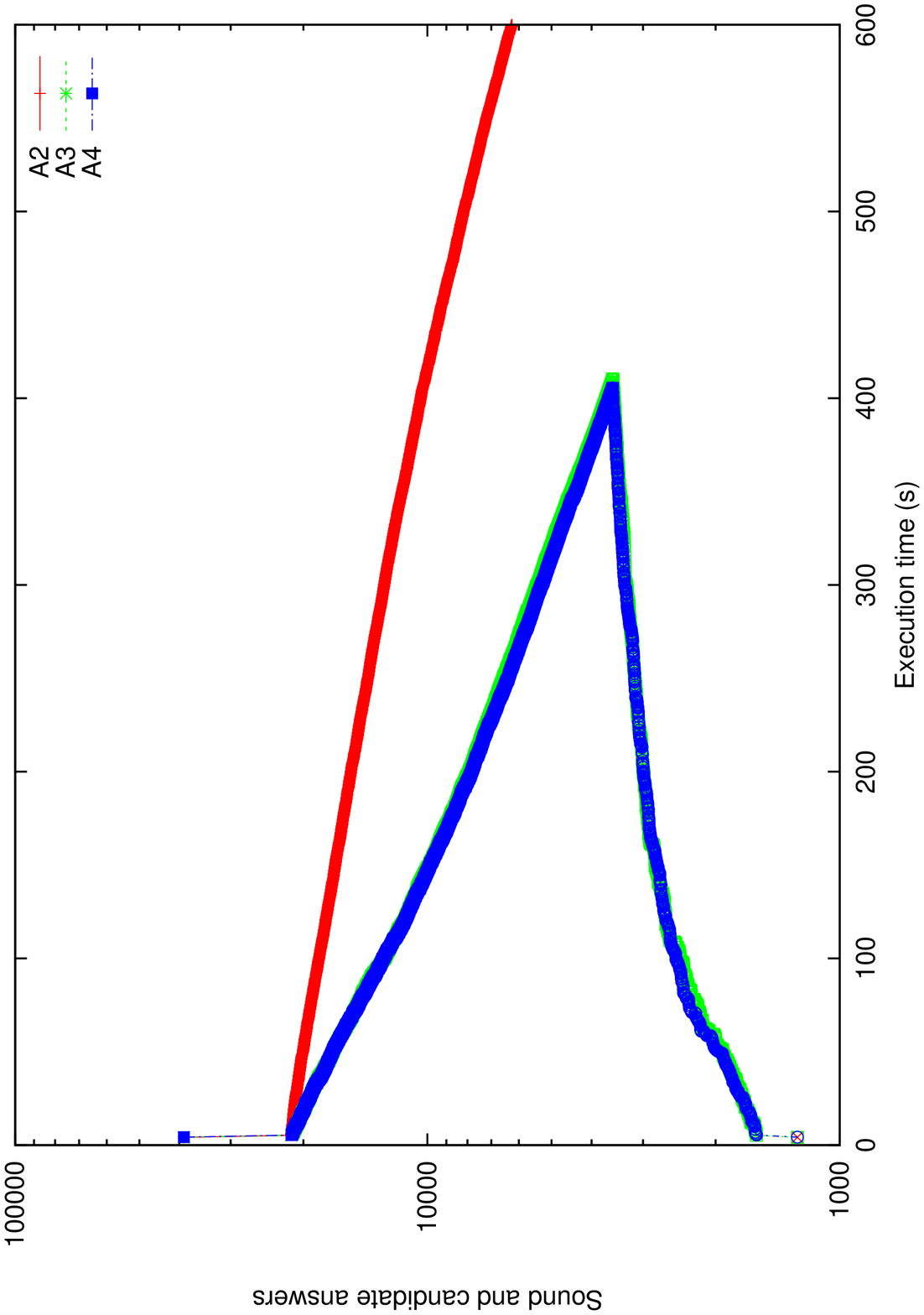}}
 \hspace{5mm}
 \subfigure[Starred algorithms on instance 34 of MCS]
   {\label{fig:mcsScissorAny}\includegraphics[width=4.2cm,angle=-90]{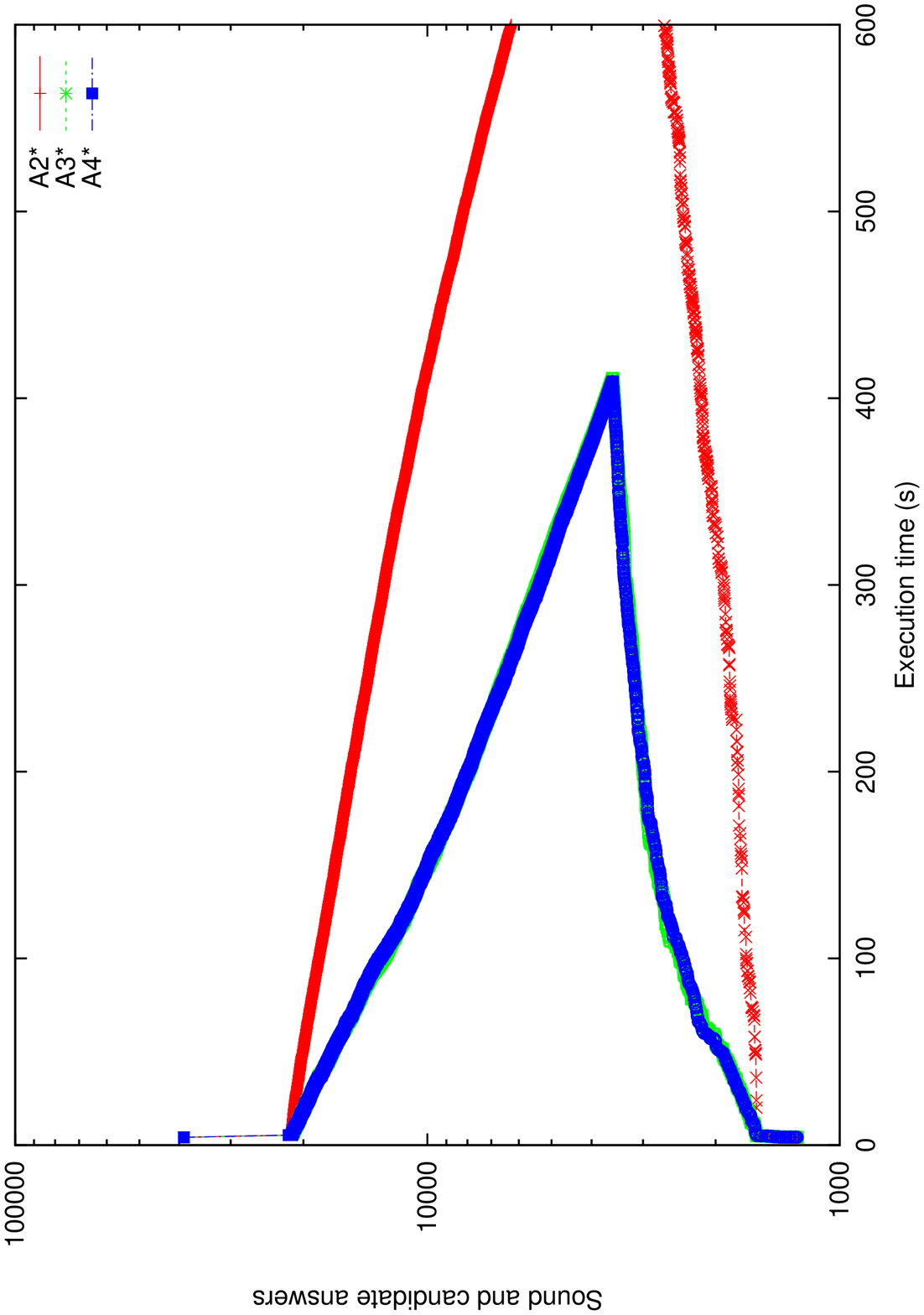}}
 \subfigure[Basic algorithms on mrpp\_6x6\#12\_16 of SBB]
   {\label{fig:satScissor}\includegraphics[width=4.1cm,angle=-90]{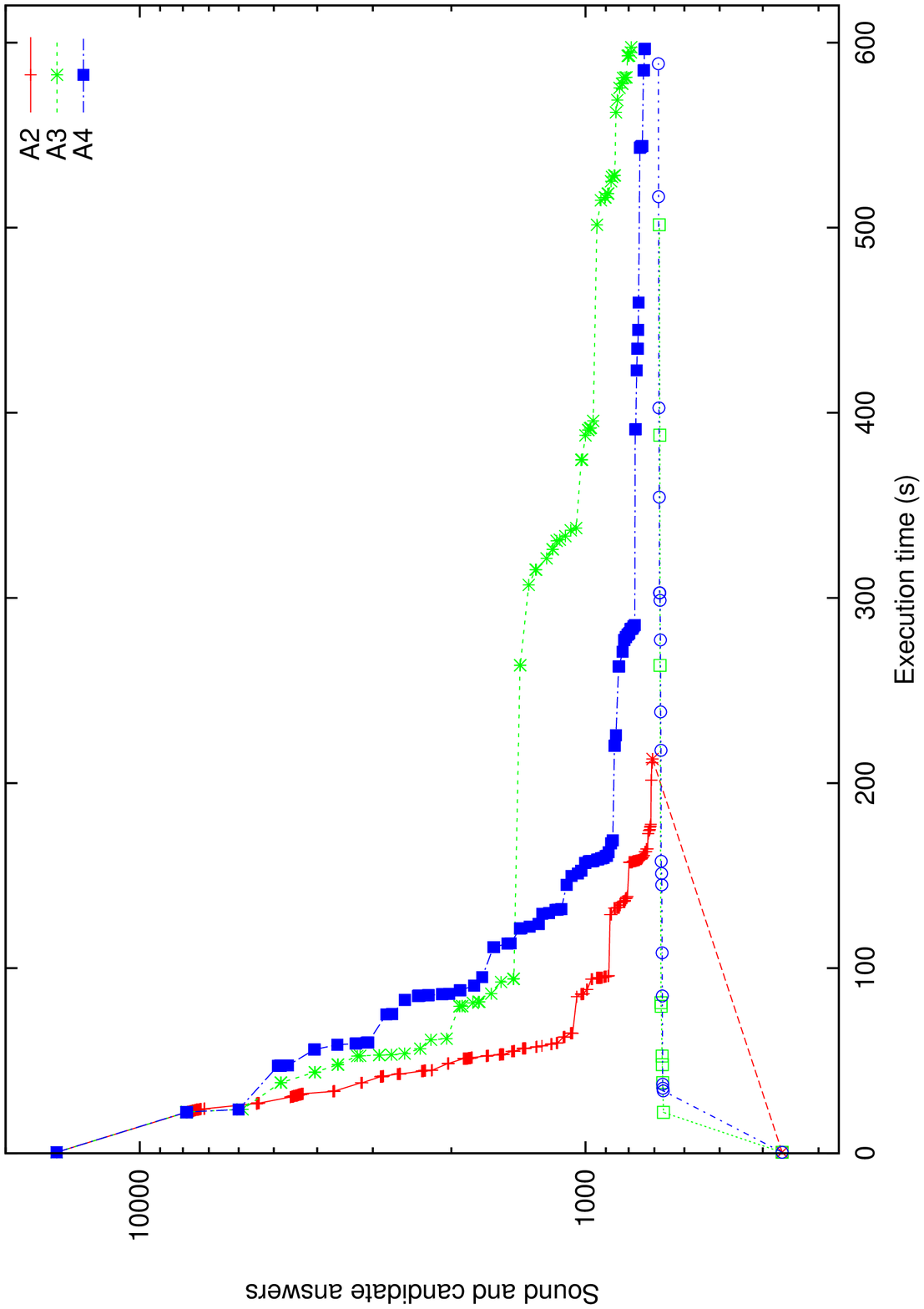}}
 \hspace{5mm}
 \subfigure[Starred algorithms on mrpp\_6x6\#12\_16 of SBB]
   {\label{fig:satScissorAny}\includegraphics[width=4.1cm,angle=-90]{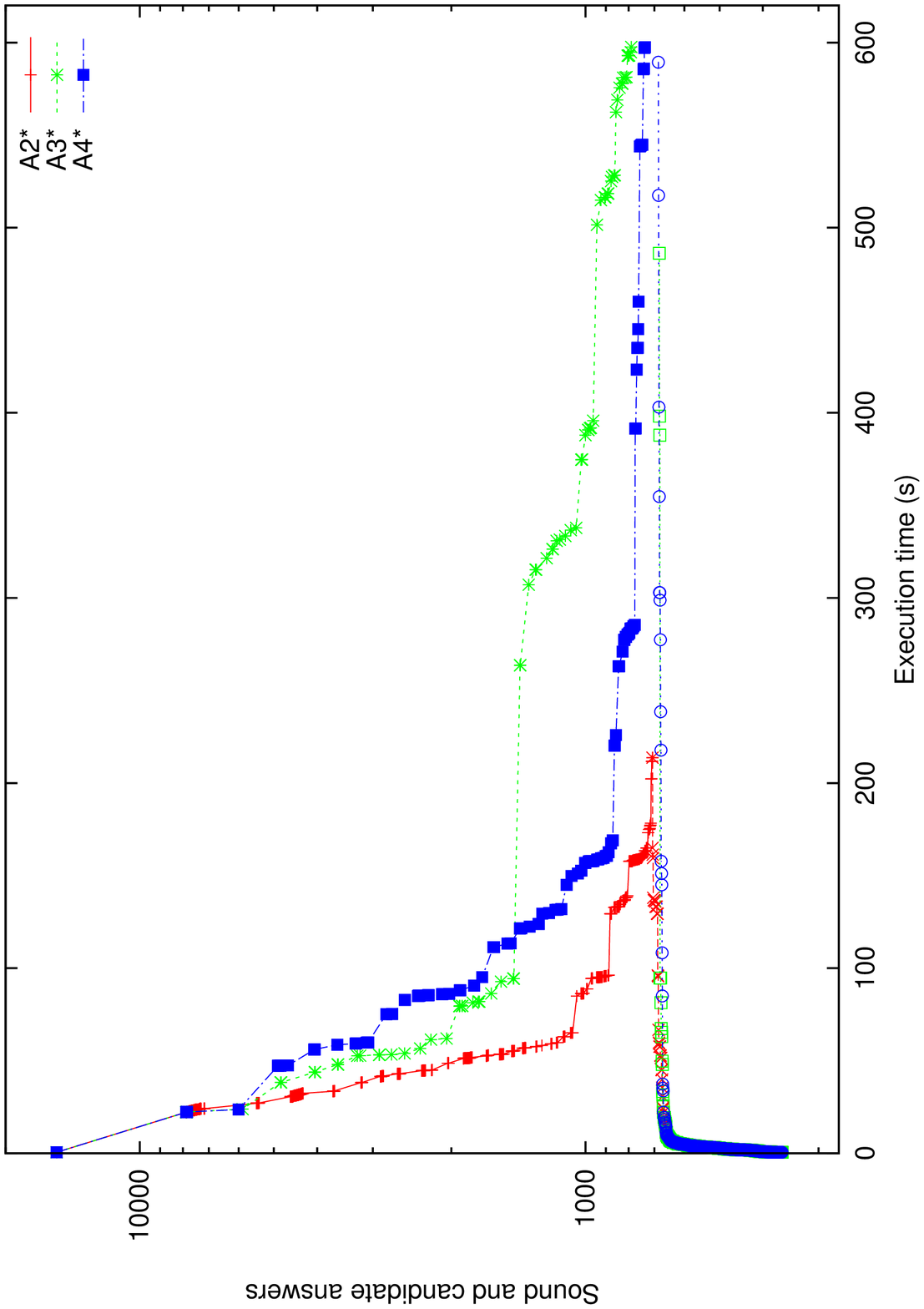}}
 \hspace{5mm}
 \subfigure[A2$^*$, A4$^*$ and multi on mrpp\_6x6\#12\_16 of SBB]
   {\label{fig:multiScissor1}\includegraphics[width=4.1cm,angle=-90]{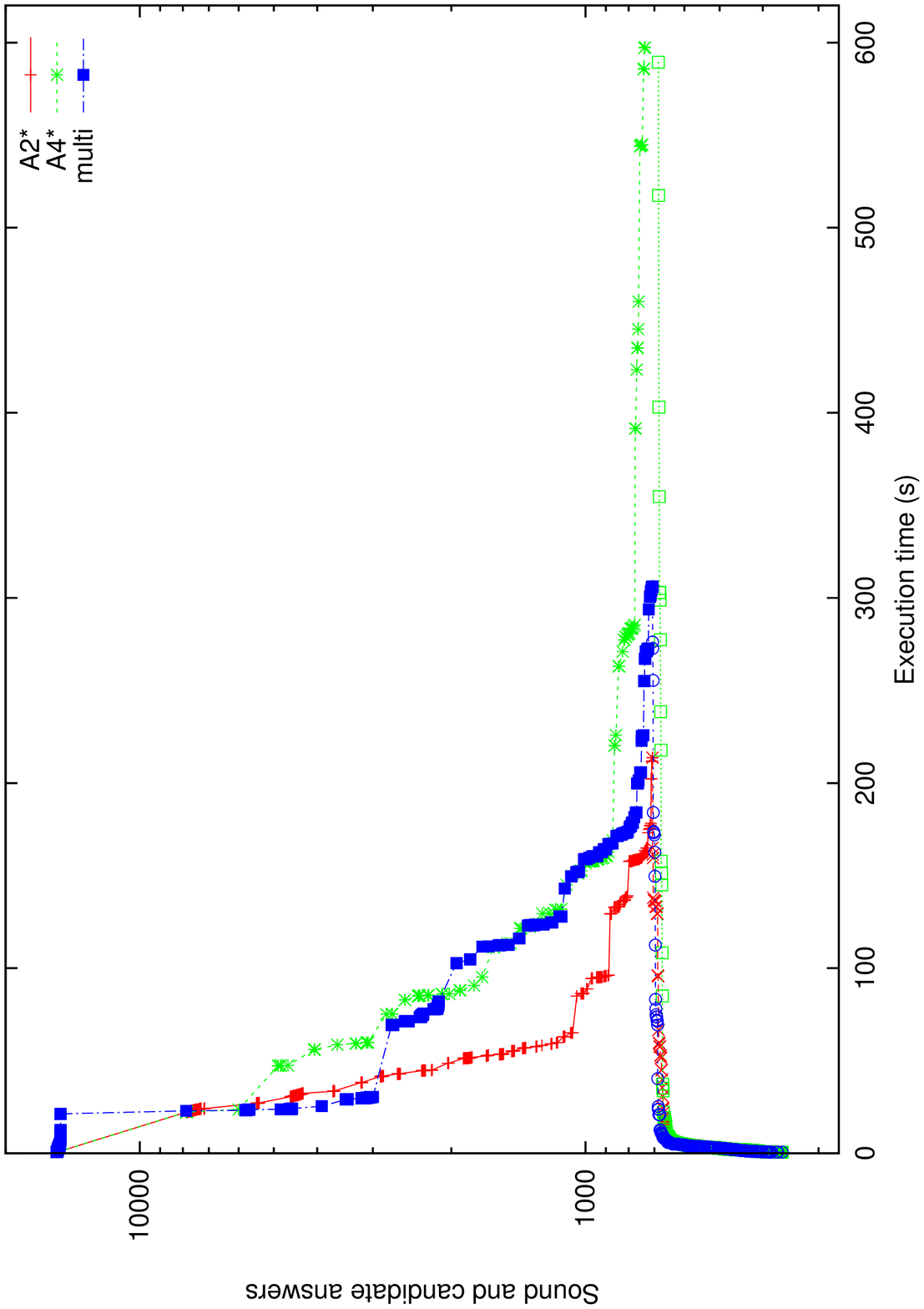}}
 \hspace{5mm}
 \subfigure[A2$^*$, A4$^*$ and multi on mrpp\_4x4\#10\_16 of SBB]
   {\label{fig:multiScissor2}\includegraphics[width=4.1cm,angle=-90]{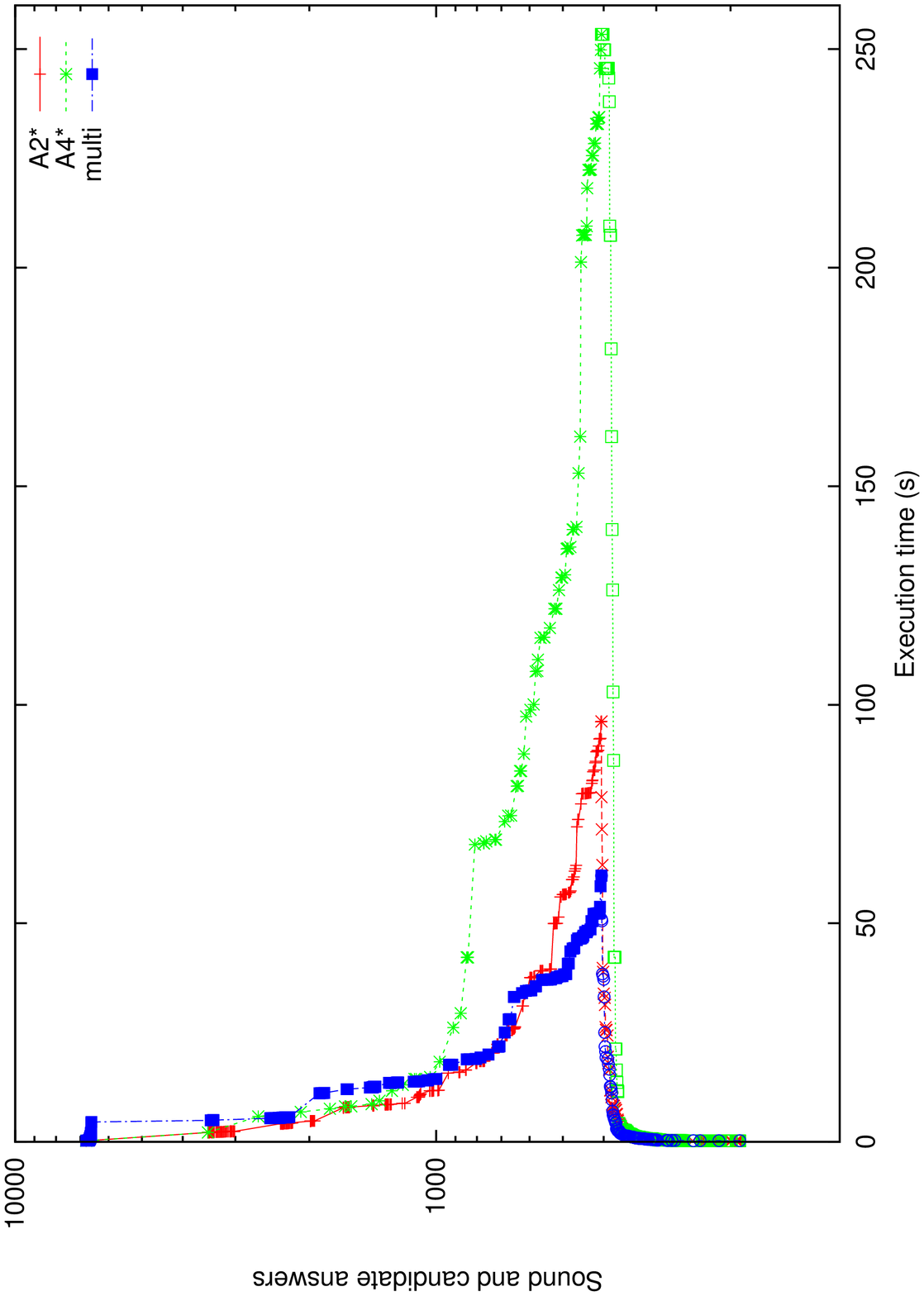}}
 \caption{Overestimate and underestimate improvement during execution}\label{fig:scissor}
 \end{figure}

In order to further confirm the above observations, we analyze in detail the behavior of the algorithms after simplifications.
In particular, Figure~\ref{fig:scissor} plots both the number of sound answers (line below) 
and the number of candidate answers (line above) over time.
In particular, Figure~\ref{fig:mcsScissorAny} is devoted to the starred algorithms on an instance of MCS,
whereas Figure~\ref{fig:mcsScissor} plots the behavior of the basic algorithms on the same instance.
First, we note that \Alg{3} and \Alg{4} perform similarly and outperform \Alg{2}, which timed out. 
Notably, \Any{2} can produce the underestimate (see the bottom line in Figure~\ref{fig:mcsScissorAny}) whereas \Alg{2} can only
print the overestimate (there is no underestimate line for \Alg{2} in Figure~\ref{fig:mcsScissor}). 
In general, \Alg{3} and \Alg{4} are able to improve their estimates better than \Alg{2}.
Note that there is a point in the plots for each improvement of estimates, and lines are very dense on MCS instances. 
This confirms that MCS instances have a huge number of stable models that can be rapidly computed.
We observed an analogous behavior for CQA.
Plots for SBB instances on Figure~\ref{fig:satScissor} and Figure~\ref{fig:satScissorAny} have, instead, sparse lines, confirming that stable model search is harder for this benchmark.
Nevertheless, the starred algorithms can rapidly produce most of the sound answers. 
A deeper look at Figure~\ref{fig:satScissor} suggests that \Alg{2} is much faster than both \Alg{3} and \Alg{4} in solving this instance.
In fact, it improves the overestimate faster than any other algorithm.
%
Figure~\ref{fig:multiScissor1} and Figure~\ref{fig:multiScissor2} focus on {\em multi} and its components.
In Figure~\ref{fig:multiScissor2}, {\em multi} is faster and improves estimates better than \Any{2} and \Any{4}.
In contrast, {\em multi} is slower than \Any{2} in Figure~\ref{fig:multiScissor1}. 
A possible cause is the information exchange among processes that modifies the program handled by \Any{2} 
with constraints produced by \Any{3}, which in this case results in an harder instance also for the \Any{2} process. 
Note that {\em multi} has a non-deterministic behavior due to its parallel nature. 
Indeed, information exchanged may be different in different runs. 

\begin{figure}[t]
 \begin{minipage}{\textwidth}
  \centering\includegraphics[trim = 0mm 14mm 0mm 0mm, clip, width=.9\textwidth]{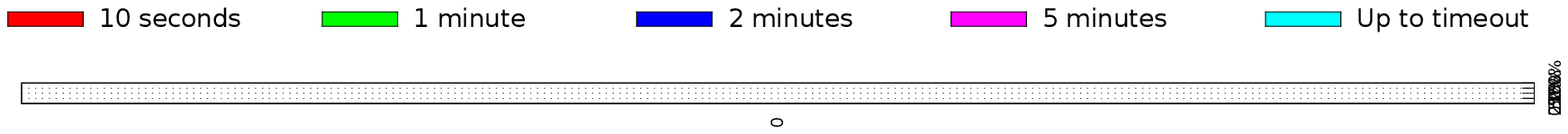}
 \end{minipage}
 \subfigure[Sound answers]{\label{fig:bar-sound}
  \includegraphics[trim = 4mm 0mm 8mm 0mm, clip, angle=-90, width=.5\textwidth]{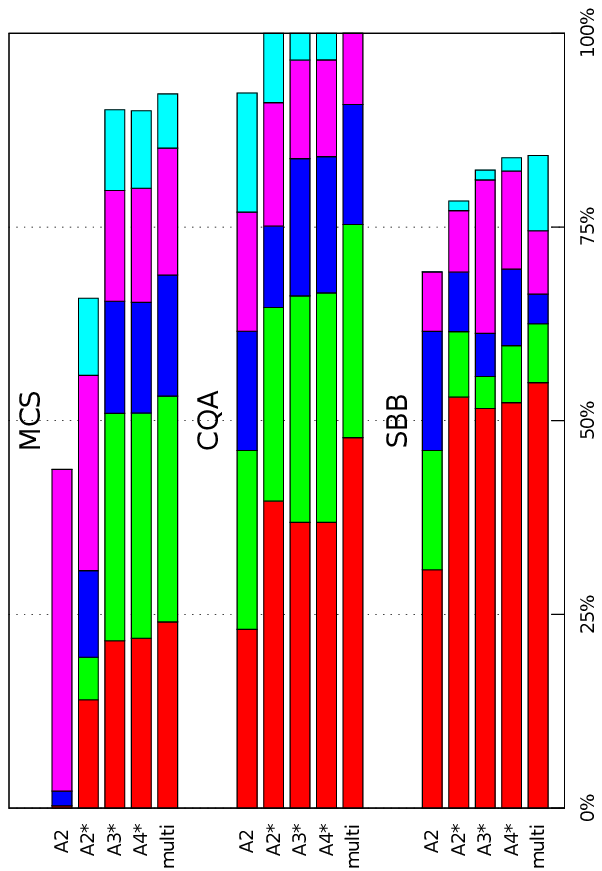}
 }%
 \subfigure[Candidates reduction]{\label{fig:bar-cand}
  \includegraphics[trim = 4mm 0mm 8mm 0mm, clip, angle=-90, width=.5\textwidth]{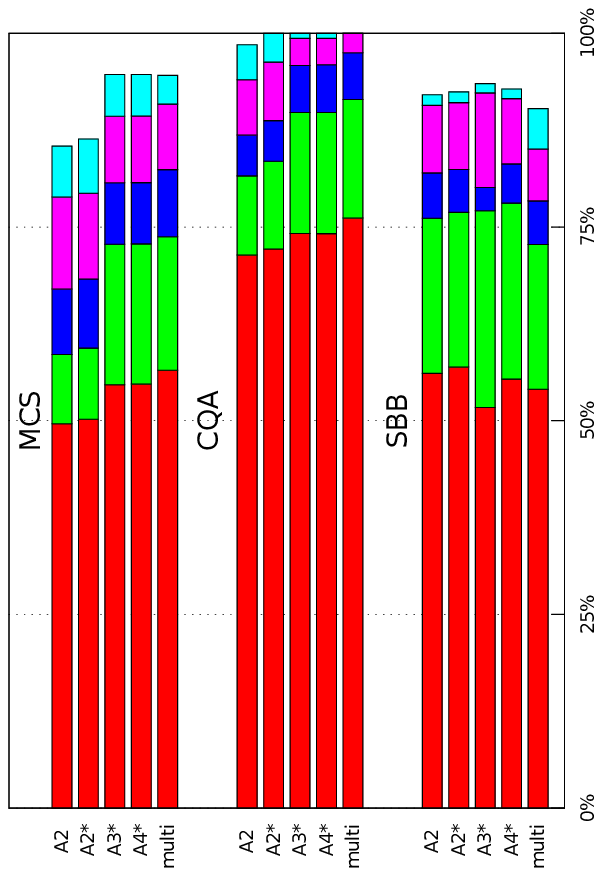}
 }
 \caption{Sound answers and candidates reduction after simplification}\label{fig:bar}
\end{figure}

More insights on the general behavior of the algorithms in the non-deterministic part of the computation can be obtained by looking at Figure~\ref{fig:bar}.
In particular, Figure~\ref{fig:bar-sound} reports the average percentage of sound answers produced  {\em after the simplification step}, while candidates reduction is shown on Figure~\ref{fig:bar-cand}.
\Alg{3} is not shown in the figure because it performs similarly to \Any{3} in this perspective.
The same holds for \Alg{4} and \Any{4}.
We point out that all bars refer to sound answers and candidates remaining after simplifications, also for \Alg{2}.
As a general observation, \Alg{2} prints sound answers only at the end of the computation, while other algorithms are anytime.
Consequently, \Alg{2} does not provide sound answers as soon as the other algorithms, as shown on Figure~\ref{fig:bar-sound}. 
Basically, \Alg{2} can print something in the first 10s only for easy instances, while
\Any{2} improves a lot in this respect. For example, \Any{2} outputs around 14\% of sound answers already in the first 10s of computation in MCS, while \Alg{2} produces no output. Nonetheless, \Any{3} and \Any{4} perform generally better than \Any{2}. The difference between \Any{3} and \Any{4} emerges only in SBB, where finding stable models is harder. In particular, by looking at Figure~\ref{fig:bar-sound}, \Any{4} produces more sound answers than \Any{3}, whereas \Any{3} is more effective in reducing the number of candidates on Figure~\ref{fig:bar-cand}. Note that \Any{4} may change the candidate to test at each restart, and in our implementation it selects the one with the largest value of activity (which very roughly means the one that was involved more often in conflicts). On the other hand, \Any{3} insists on the same candidate until the end of a stable model search. As a consequence, \Any{4} has more chances to find inconsistent branches and, therefore, to improve the underestimate.
On the contrary, \Any{3} has more chances to find a stable model and, thus, to improve the overestimate. 
This suggests that \Any{4} should be preferred, since it outputs sound answers more frequently, and also because, 
as discussed above, \Any{4} is faster than \Any{3} on the average (see Table~\ref{tab:results}).
Concerning \emph{multi}, we observe that it outperforms the alternatives on CQA and SBB,
while it is less effective in printing sound answers on the average for SBB instances.
This is due to the presence of an outlier in this benchmark that spoils the average.
That outlier can be explained by the already discussed non-deterministic behavior of {\em multi}.
Analogous considerations can be done for the reduction of candidates on Figure~\ref{fig:bar-cand}.

\begin{figure}
 \centering
 \subfigure[25\% of sound answers]
   {\label{fig:mcs25}\includegraphics[width=4.2cm,angle=-90]{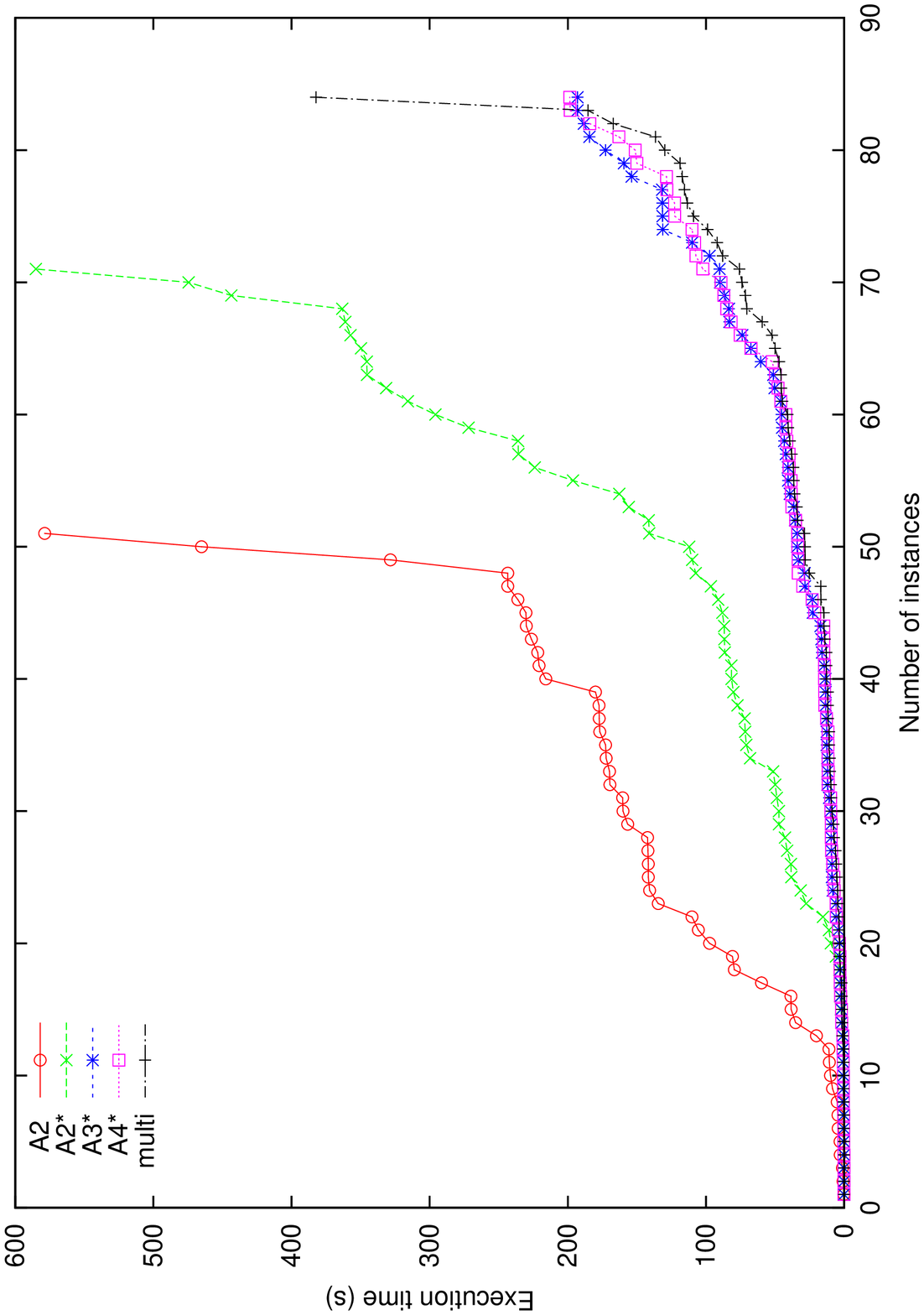}}
 \hspace{5mm}
 \subfigure[100\% of sound answers]
   {\label{fig:mcs100}\includegraphics[width=4.2cm,angle=-90]{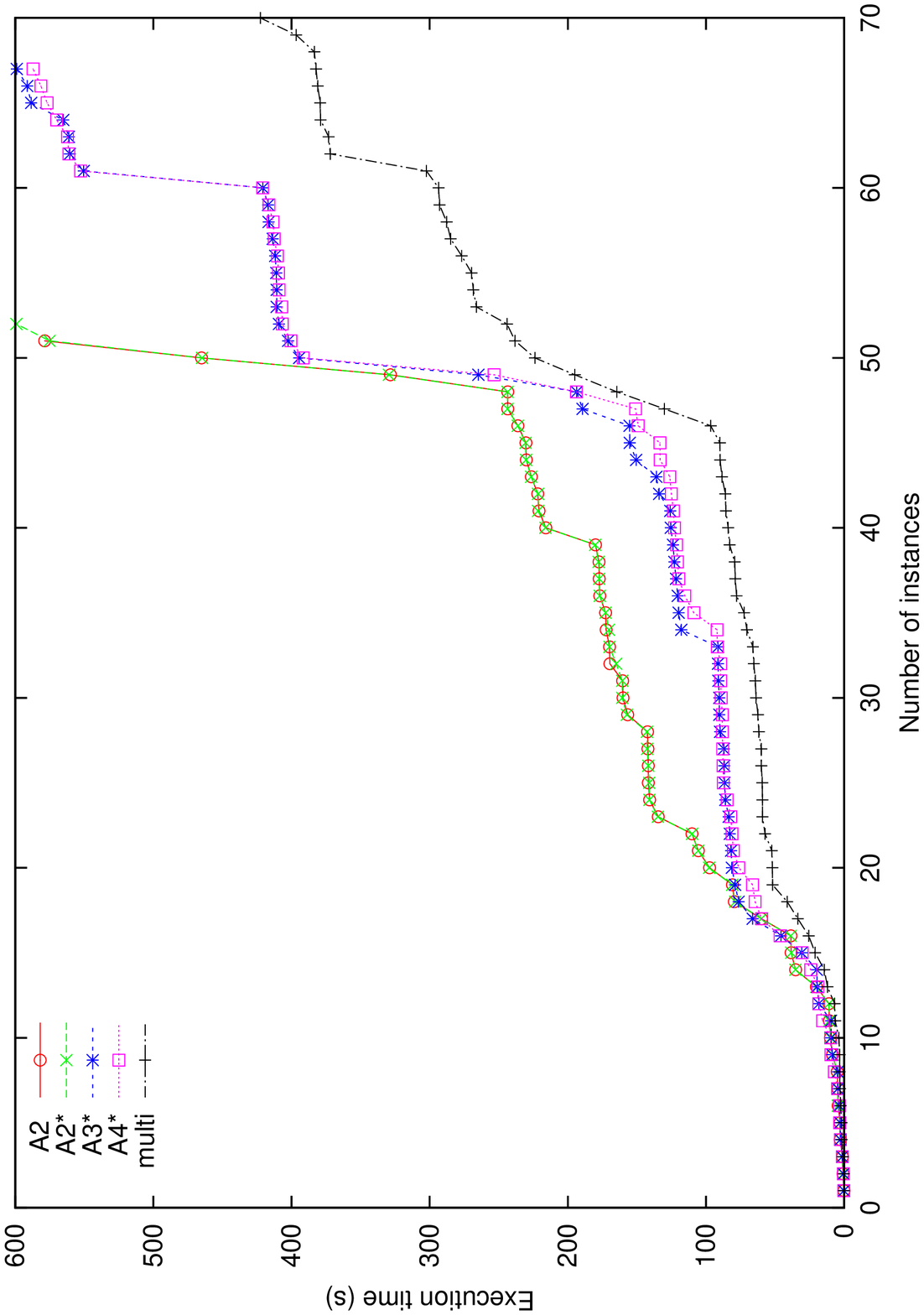}}
 \caption{Time performance of algorithms for computing underestimates}\label{fig:cactus}
 \end{figure}

Another perspective on the behavior of the various algorithms can be obtained by looking at Figure~\ref{fig:cactus}.
Here are reported, for each benchmark, two variants of the classical cactus plot. 
Recall that, in a cactus plot, the x-axis reports the number of instances that are solved within the time reported on the y-axis. 
Here we consider variants where
on the x-axis is the number of instances for which an algorithm printed 25\% (resp. 100\%) of sound answers
within the time reported on the y-axis.
We point out that anytime solvers can print 100\% of sound answers before the timeout, even if termination is not reached within the allotted time.
Figure~\ref{fig:cactus} confirms that \Alg{2} is slower than anytime algorithms in printing sound answers.
It is interesting to note that the anytime \Any{2} improves sensibly \Alg{2} in all benchmarks, especially at the beginning of the computation. 
We also observe that \Any{4} is slightly preferable to \Any{3}, and {\em multi} is the fastest solution. 
Note that the differences are more evident in the plots on the left that focus on the first 25\% of sound answers. 
Finally, we confirm that \Any{2} is the fastest single-process implementation in SBB.
Nonetheless, \Any{3} and \Any{4} print more sound answers also in non-terminating instances.

To sum up, anytime algorithms are convenient in practice because they determine several sound answers already in the first steps of computation.
In particular, the starred variants are preferable (\Any{4} leading the group) because they provide sound answers as soon as possible and are thus effective also in non-terminating instances.
%
Finally, the parallel system combining \Any{4} with \Any{2} is the best variant overall.

\section{Related Work}

The computation of cautious consequences in ASP is a feature available in two solvers, namely DLV \cite{MarateaRFL08} and clasp \cite{gebs-etal-2012-aij}.
The algorithm implemented by DLV is enumeration of models, while clasp implements overestimate reduction.
Our implementation differs from these solvers especially with respect to the output produced during the computation of cautious consequences.
In fact, DLV does not print any form of estimation during the computation, and clasp only prints overestimates.
Our implementation, instead, is anytime and thus prints both underestimates and overestimates during the computation.
Underestimates provide sound answers also when termination is not affordable in reasonable time, and are thus of practical importance for hard problems.
It is interesting to observe that among the strategies supported by our implementation there is \Any{2}, an anytime variant of the algorithm used by clasp that performed very well on two of our three benchmarks.
We also note that DLV and clasp feature brave reasoning, which is not currently supported by our implementation. 

Clasp, being a parallel ASP solver \cite{DBLP:journals/tplp/GebserKS12}, also supports the parallel computation of cautious consequences by means of the overestimate reduction algorithm.
Our proposal is different, as it is based on the combination of two different algorithms, namely iterative partial coherence testing and overestimate reduction, for reducing both estimates at the same time. 
However, we observe that our parallel implementation is a proof-of-concept prototype obtained by combining two instances of WASP (properly modified to share short learned constraints and answer estimates) controlled by a Python script.
It is devised to show the benefits of combining two different algorithms, while a more efficient implementation is subject of future work.

The computation of cautious consequences of a ground program is related to the problem of backbone computations of propositional formulas \cite{DBLP:conf/ecai/Marques-SilvaJL10,DBLP:conf/ijcai/SlaneyW01}.
In fact, the backbone of a propositional formula $\varphi$ is the set of literals that are true in all  models of $\varphi$.
Several algorithms for computing backbones of propositional formulas are based on variants of the iterative consistency testing algorithm \cite{DBLP:conf/ecai/Marques-SilvaJL10,janota-etal-2013-aicom}, which essentially corresponds to the iterative coherence testing algorithm analyzed in this paper.
Backbone search algorithms usually feature additional techniques for removing candidates to be tested, such as {\em implicant reduction} and {\em core-based chunking} \cite{DBLP:conf/tacas/RaviS04}.
Most of the implicant reduction techniques are not applicable to normal ASP programs because of the intrinsic minimality of stable models.
For example, backbone search algorithms can reduce their overestimate by removing all unassigned variables when a (partial) model is found; in our setting, ASP solvers always terminate with a complete assignment.
Core-based chunking, instead, requires a portfolio of algorithms \cite{janota-etal-2013-aicom} in order to be effective, which is beyond the scope of this paper.

Note that all considered algorithms work on ground programs. Combinations 
with query optimization techniques such as magic sets \cite{grec-2003,alvi-fabe-2011-aicomm} are possible but not the focus of the paper.
%
%
%

\section{Conclusion}
Several algorithms for computing cautious consequences of ASP programs were analyzed in this paper. 
At the time of this writing, ASP solvers do not implement anytime algorithms, which means that computation must terminate in order to obtain \emph{some} cautious consequences.
On the other hand, the computation of cautious consequences is similar to the computation of backbones of propositional theories, for which anytime algorithms do exist.
We adapted one of these algorithms to cautious reasoning, showing that underestimates can be effectively obtained in reasonable time also for hard instances.
Moreover, we introduced a general strategy to obtain anytime variants of existing algorithms such as those implemented by DLV and clasp.
All algorithms as well as a proof-of-concept parallel implementation were implemented in the solver WASP.
Our empirical evaluation highlights that sound answers are computable within the first seconds of computation in many cases.
Moreover, the performance of the parallel system is encouraging and leaves space for future work on this subject.

\noindent \textbf{Acknowledgements.} This research has been partly supported by Regione Calabria under the EU Social Fund and project PIA KnowRex POR FESR 2007- 2013, 
by the Italian Ministry of University and Research under PON project ``Ba2Know S.I.-LAB'' n. PON03PE\_0001, and by National Group for Scientific Computation (GNCS-INDAM).


\appendix

\section{Proof of Theorem~\ref{thm:main}}

The proof is split into several lemmas using $P_i,L_i,U_i,O_i,I_i$ to denote the content of variables $P,L,U,O,I$ at step $i$ of computation ($i \geq 0$).
More in detail, in Lemma~\ref{lem:seq} we will first show that underestimates form an increasing sequence and, on the contrary, overestimates form a decreasing sequence.
Then, in Lemma~\ref{lem:sm} we will prove properties of stable models of programs $P_i \cup L_i$ ($i \geq 0$).
Correctness of estimates will be shown in Lemmas~\ref{lem:over}--\ref{lem:under}, and termination of the algorithms in Lemma~\ref{lem:terminate}.
Finally, in Lemma~\ref{lem:anytime} we will extend the proof to variants using ComputeStableModel$^*$.

\begin{lemma}\label{lem:seq}
$U_i \subseteq U_{i+1}$ and $O_{i+1} \subseteq O_i \subseteq Q$ for each $i \geq 0$.
\end{lemma}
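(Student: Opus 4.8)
The plan is to prove both inclusions by induction on the step index $i$, doing a case analysis on which line of Algorithm~\ref{alg:cautious} --- or of the procedure it invokes (EnumerationOfModels, OverestimateReduction, IterativeCoherenceTesting, IterativePartialCoherenceTesting, or ComputeStableModel$^*$) --- is executed in passing from step $i$ to step $i+1$. The base case is immediate: the initialization at line~1 of Algorithm~\ref{alg:cautious} sets $U_0 = \emptyset$ and $O_0 = Q$, so $U_0 \subseteq U_1$ and $O_1 \subseteq O_0 \subseteq Q$ hold trivially. The rest is bookkeeping over the finitely many lines at which $U$ and $O$ are written.

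The claim for the overestimate is the routine half. Scanning the pseudocode, every assignment to $O$ has the form $O := O \cap I$: at line~5 of Algorithm~\ref{alg:cautious}, and at one line of each of EnumerationOfModels, OverestimateReduction, IterativeCoherenceTesting, and IterativePartialCoherenceTesting (ComputeStableModel$^*$ never touches $O$). Such a step replaces $O_i$ by $O_i \cap I_i \subseteq O_i$, and since $O_i \subseteq Q$ by the inductive hypothesis also $O_{i+1} \subseteq Q$; on every other line $O$ is unchanged. Hence $O_{i+1} \subseteq O_i \subseteq Q$ for all $i \geq 0$, with no appeal to the semantics of ComputeStableModel.

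For the underestimate the basic procedures are equally straightforward. In IterativeCoherenceTesting and IterativePartialCoherenceTesting the only assignment to $U$ is $U := U \cup \{a\}$, which can only enlarge $U$. In EnumerationOfModels and OverestimateReduction the only assignment to $U$ is $U := O$, and it fires precisely when the preceding call to ComputeStableModel returned $\bot$ --- that is, exactly on the step at which the main loop of Algorithm~\ref{alg:cautious} is about to exit; so along any run of a basic algorithm $U$ stays equal to $\emptyset$ until that last step and then becomes $O$, and in particular $U_i \subseteq U_{i+1}$ at every step. For the starred variants there is in addition the assignment $U := U \cup \{a \in Q \mid L \mbox{ contains } \bot \leftarrow \naf a\}$ inside ComputeStableModel$^*$, which again only enlarges $U$; the single new point is that now $U := O$ may fire with $U \neq \emptyset$, so there one needs the auxiliary fact $U_i \subseteq O_i$ to conclude $U_i \subseteq U_{i+1}$.

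The only non-routine ingredient is therefore that $U_i \subseteq O_i$ at the steps where the starred variants execute $U := O$, i.e.\ soundness of the underestimate $U_i \subseteq Q \cap CC(P)$ together with $Q \cap CC(P) \subseteq O_i$. The delicate part of soundness, and the place I expect the real work, is arguing that an atom $a$ inserted into $U$ because $L$ already contains $\bot \leftarrow \naf a$ is genuinely a cautious consequence of the \emph{original} program $P$, even though $L$ accumulates constraints learned while searching the augmented programs $P_i \cup L_i$ (to which constraints Constraint$(\cdot)$ have been added). This is exactly what the forthcoming analysis of the stable models of $P_i \cup L_i$ (Lemma~\ref{lem:sm}) together with the correctness Lemmas~\ref{lem:over}--\ref{lem:under} supply; granting those, the case analysis above closes and yields the statement, and for the basic algorithms the argument is self-contained and purely structural.
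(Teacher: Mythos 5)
Your proof is correct and follows essentially the same route as the paper's: a direct inspection of every assignment to $U$ (only set unions, plus the single terminal $U := O$ in EnumerationOfModels and OverestimateReduction) and to $O$ (only intersections starting from $Q$). You are in fact slightly more careful than the paper, whose proof of this lemma silently ignores that in the starred variants $U$ may already be non-empty when $U := O$ fires, so that the auxiliary fact $U_i \subseteq O_i$ (supplied only later by Lemmas~\ref{lem:over} and~\ref{lem:under}) is genuinely needed there, as you point out.
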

\begin{proof}
Variable $U$ is initially empty.
EnumerationOfModels and OverestimateReduction reassign $U$ only once.
IterativeCoherenceTesting and UnderestimateReduction always enlarge the set stored in $U$ by means of set union (line~4).
Concerning variable $O$, it is initially equal to $Q$ and restricted at each reassignment by means of set intersection (line~7 for OverestimateReduction; line~6 for the other procedures).
\end{proof}

\begin{lemma}\label{lem:sm}
$SM(P_{i+1} \cup L_{i+1}) \subseteq SM(P_i \cup L_i)$ for each $i \geq 0$.
For IterativeCoherenceTesting and IterativePartialCoherenceTesting we also have $SM(P_{i+1} \cup L_{i+1}) = SM(P_i \cup L_i)$ for each $i \geq 0$.
\end{lemma}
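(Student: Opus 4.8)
The statement concerns only how the program variable $P$ and the learned-constraint variable $L$ evolve from one computation step to the next, so the plan is to read the relevant updates straight off Algorithm~\ref{alg:cautious} and its four subprocedures and to combine them with two elementary facts about normal programs: (a)~adding a constraint $c$ (a rule with head $\bot$) to a program $R$ gives $SM(R \cup \{c\}) = \{M \in SM(R) \mid M \models c\} \subseteq SM(R)$; and (b)~if every stable model of $R$ already satisfies $c$, then $SM(R \cup \{c\}) = SM(R)$. Fact~(b), together with the soundness of conflict-driven learning (every stable model of the program passed to ComputeStableModel satisfies each constraint it records, and each deleted constraint is itself entailed), yields the auxiliary invariant $SM(P_i \cup L_i) = SM(P_i)$, which I would carry through the induction together with the rest.

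Granting that invariant, the heart of the proof is a case analysis on which subprocedure is executed between step $i$ and step $i+1$. In IterativeCoherenceTesting and IterativePartialCoherenceTesting the variable $P$ is never assigned, so $P_{i+1} = P_i$ and therefore $SM(P_{i+1} \cup L_{i+1}) = SM(P_{i+1}) = SM(P_i) = SM(P_i \cup L_i)$, which is exactly the claimed equality. In EnumerationOfModels, line~1 replaces $P$ by $P \cup {}$Constraint($I$) for the previously computed model $I$, so $P_{i+1} = P_i \cup {}$Constraint($I$) and fact~(a) gives $SM(P_{i+1} \cup L_{i+1}) = SM(P_{i+1}) \subseteq SM(P_i) = SM(P_i \cup L_i)$ (indeed properly, since $I$ itself disappears). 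In OverestimateReduction, line~1 adds Constraint($O$) but line~5 removes it again whenever a model is found, so a non-terminal step has $P_{i+1} = P_i$; the only step at which the posted constraint survives is the terminal one --- the $\bot$ branch that sets $U := O$ and leaves the loop --- where $P_{i+1} = P_i \cup {}$Constraint($O$) and fact~(a) once more supplies the inclusion. Collecting the cases gives the first assertion of the lemma, and the two procedures that never write $P$ give the second.

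The point I expect to require the most care is the auxiliary invariant $SM(P_i \cup L_i) = SM(P_i)$ in the presence of the \emph{temporary} query constraints posted by EnumerationOfModels and OverestimateReduction: a constraint learned while such a temporary constraint is in force is entailed by $P$ \emph{together with} that constraint, not by $P$ alone, so the argument must rely on the underlying solver keeping $L$ sound with respect to whatever program currently sits in $P$ --- equivalently, on it not committing (or later retracting) learned constraints whose derivation used a constraint that is subsequently withdrawn. Once that is nailed down, whether by invoking the corresponding property of ComputeStableModel or by restricting attention to a learning scheme that only records consequences of the base program, the remainder is the routine bookkeeping sketched above and is entirely independent of the estimate variables $U$ and $O$ handled in Lemma~\ref{lem:seq}.
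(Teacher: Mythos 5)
Your argument is correct and follows essentially the same route as the paper's: the published proof is your case analysis compressed into three sentences --- $P$ is reassigned only by EnumerationOfModels and OverestimateReduction, where constraints are added and (by the splitting set theorem) can only remove stable models, while the learned constraints in $L$ are implicit in the program stored in $P$ and hence semantically inert, which immediately gives equality for the two procedures that never touch $P$. The one point you rightly single out as delicate --- that a constraint learned while the temporary Constraint($O$) is posted must remain sound once that constraint is withdrawn --- is exactly what the paper's bare assertion that ``learned constraints are implicit in the program stored by variable $P$'' silently assumes of the underlying solver, so your version makes explicit a proof obligation the paper discharges only by appeal to the implementation.
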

\begin{proof}
Variable $P$ is reassigned only by EnumerationOfModels and OverestimateReduction, where constraints are added to the previous program.
Constraints can only remove stable models (as a consequence of the Splitting Set Theorem by \citeNP{lifs-turn-94}).
On the other hand, learned constraints stored in variable $L$ are implicit in the program stored by variable $P$, and thus cannot change its semantics.
\end{proof}

\begin{lemma}\label{lem:over}
$O_i \supseteq Q \cap CC(P)$ for each $i \geq 0$.
\end{lemma}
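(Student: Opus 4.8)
The plan is to prove the statement by induction on the step index $i$, maintaining the invariant $Q \cap CC(P) \subseteq O_i$. For the base case, recall that $O_0 = Q$ is set at line~1 of Algorithm~1, so $Q \cap CC(P) \subseteq Q = O_0$ holds trivially.

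For the inductive step, I would assume $Q \cap CC(P) \subseteq O_i$ and inspect step $i$. The variable $O$ is reassigned only through an intersection $O := O \cap I$ — at line~5 of Algorithm~1 and at the corresponding line of each of the procedures EnumerationOfModels, OverestimateReduction, IterativeCoherenceTesting, and IterativePartialCoherenceTesting — and it is never touched at a step where $U := O$ is executed. Hence either $O_{i+1} = O_i$, and we are done by the induction hypothesis, or $O_{i+1} = O_i \cap I$ where $I$ is the interpretation returned by the relevant call to ComputeStableModel (resp.\ ComputeUpToNextRestart). Since this update occurs only in the branch where $I \neq \bot$, $I$ is a genuine stable model of the program searched at that moment; using Lemma~\ref{lem:sm} iteratively together with $P_0 = P$ and $L_0 = \emptyset$ — so that $SM(P_i \cup L_i) \subseteq SM(P)$, and likewise the temporary program $P \cup \mathrm{Constraint}(O)$ used inside OverestimateReduction has stable models contained in $SM(P)$ — we obtain $I \in SM(P)$. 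Therefore $CC(P) \subseteq I$, so in particular $Q \cap CC(P) \subseteq I$, and combining this with the induction hypothesis gives $Q \cap CC(P) \subseteq O_i \cap I = O_{i+1}$.

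The only delicate point is bookkeeping: one must check that, at the exact step at which $O$ is updated, the interpretation $I$ used in the intersection really is a stable model of a program whose stable models are contained in $SM(P)$. This requires a short case analysis over the four procedures, with OverestimateReduction being the mildly non-trivial one because there a constraint is added to $P$ before the search and removed afterwards — but since constraints only shrink $SM$, the returned $I$ still belongs to $SM(P)$. The starred variants introduce no new difficulty here, as ComputeStableModel$^*$ differs from ComputeStableModel only in additionally enlarging $U$; extending the argument to those variants is deferred to Lemma~\ref{lem:anytime}. I do not anticipate a genuine obstacle: the entire content is that every overestimate update intersects with a stable model of (a constraint-enriched restriction of) $P$, and cautious consequences of $P$ survive every such intersection.
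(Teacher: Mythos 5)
Your proof is correct and follows essentially the same route as the paper's: induction on $i$ with base case $O_0 = Q$, and in the inductive step an iterated application of Lemma~\ref{lem:sm} to conclude that the interpretation $I$ used in the intersection is a stable model of $P$ and hence contains $CC(P)$. The bookkeeping point you raise about the temporary constraint in OverestimateReduction is glossed over in the paper's proof but is resolved exactly as you indicate, since adding constraints only shrinks $SM$.
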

\begin{proof}
The base case is true because $O_0 = Q$.
Assume the claim is true for some $i \geq 0$ and consider $O_{i+1} = O_i \cap I_{i+1}$, where $I_{i+1} \in SM(P_i \cup L_i)$.
By $i$ applications of Lemma~\ref{lem:sm}, we obtain $I_{i+1} \in SM(P_0 \cup L_0)$, i.e., $I_{i+1} \in SM(P)$.
We can thus conclude $a \in O_i \setminus O_{i+1}$ implies $a \notin CC(P)$, and we are done.
\end{proof}

\begin{lemma}\label{lem:under}
$U_i \subseteq Q \cap CC(P)$ for each $i \geq 0$.
\end{lemma}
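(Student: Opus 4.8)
The plan is to argue by induction on the step index $i$, as in the proof of Lemma~\ref{lem:over}. The base case holds because $U_0 = \emptyset$. If $P$ is incoherent the main loop of Algorithm~1 is never entered, so $U$ stays empty and the claim is trivial; I therefore assume $SM(P) \neq \emptyset$. For the inductive step I would first note that $U$ is modified in only two ways: it is overwritten with the current overestimate (the assignment $U := O$ performed by EnumerationOfModels and OverestimateReduction when an unrestricted stable model search returns $\bot$), or it is enlarged by the set union $U := U \cup \{a\}$ performed by IterativeCoherenceTesting and IterativePartialCoherenceTesting when a search with $C = \{a\}$ returns $\bot$. (The additional ways in which ComputeStableModel$^*$ grows $U$ are deferred to Lemma~\ref{lem:anytime}.) In every other case $U_{i+1} = U_i$ and the inductive hypothesis applies verbatim.

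For the update $U_{i+1} = U_i \cup \{a\}$, the atom $a$ is chosen from $O_i \setminus U_i$, hence $a \in O_i \subseteq Q$ by Lemma~\ref{lem:seq}, and $U$ is enlarged only when the incremental search returns $\bot$; by the semantics of ComputeStableModel (resp.\ ComputeUpToNextRestart) recalled in Section~\ref{sec:algorithms}, this means that every stable model $M$ of $P_i \cup L_i$ contains $a$. Since for these two procedures the second part of Lemma~\ref{lem:sm} gives $SM(P_i \cup L_i) = SM(P_0 \cup L_0) = SM(P)$, we conclude $a \in CC(P)$, so together with the inductive hypothesis $U_{i+1} \subseteq Q \cap CC(P)$.

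For the update $U_{i+1} = O_i$ it suffices to prove $O_i \subseteq CC(P)$, since $O_i \subseteq Q$ by Lemma~\ref{lem:seq}. In OverestimateReduction the search that returned $\bot$ was run on $P$ augmented with $\mathrm{Constraint}(O_i)$ (and with the learned constraints $L_i$, which by Lemma~\ref{lem:sm} are consequences of $P$ and hence discard no $M \in SM(P)$); because adding $\mathrm{Constraint}(O_i)$ discards exactly the stable models that include $O_i$, the absence of any stable model forces $O_i \subseteq M$ for every $M \in SM(P)$, i.e.\ $O_i \subseteq CC(P)$. In EnumerationOfModels, writing $I^{(1)},\dots,I^{(k)}$ for the stable models of $P$ produced so far, the search that returned $\bot$ was run on $P$ augmented with $\mathrm{Constraint}(I^{(1)}),\dots,\mathrm{Constraint}(I^{(k)})$, and $O_i = Q \cap I^{(1)} \cap \dots \cap I^{(k)}$; as distinct stable models of a normal program are incomparable under $\subseteq$ (an immediate consequence of the minimality clause in the definition), each $\mathrm{Constraint}(I^{(j)})$ discards exactly $I^{(j)}$, so the absence of any stable model forces $SM(P) = \{I^{(1)},\dots,I^{(k)}\}$, whence $CC(P) = I^{(1)} \cap \dots \cap I^{(k)}$ and $O_i = Q \cap CC(P)$. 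In both subcases $U_{i+1} = O_i \subseteq Q \cap CC(P)$, which closes the induction.

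I expect the EnumerationOfModels subcase to be the delicate point, as it is the only place requiring a \emph{global} argument — that every stable model of $P$ has already been enumerated — rather than the local reasoning used elsewhere. The crux there is the incomparability of stable models of normal programs, equivalently the monotone behaviour of $SM$ under addition of constraints already exploited (via the Splitting Set Theorem) in the proof of Lemma~\ref{lem:sm}, which guarantees that each $\mathrm{Constraint}(I^{(j)})$ eliminates $I^{(j)}$ and no other stable model. Everything else follows directly from Lemmas~\ref{lem:seq} and~\ref{lem:sm} and the stated input/output behaviour of the (partial) stable model search routines.
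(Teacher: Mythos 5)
Your proof is correct and follows the paper's own structure almost exactly: induction on the step index, with the same case split between the union update $U_{i+1}=U_i\cup\{a\}$ (where, as in the paper, you use the second part of Lemma~\ref{lem:sm} and the fact that $\bot$ is returned only when every stable model of $P_i\cup L_i$ contains $a$) and the overwrite $U_{i+1}=O_i$. The one place where you genuinely diverge is the EnumerationOfModels subcase. You invoke the antichain property of stable models of normal programs to argue that each $\mathrm{Constraint}(I^{(j)})$ eliminates exactly $I^{(j)}$, hence that the enumeration is exhaustive and $O_i=Q\cap CC(P)$. The paper instead argues by contradiction without ever needing incomparability: if some $a\in O_i$ were not a cautious consequence, a stable model $M$ with $a\notin M$ would satisfy every added constraint simply because $a\in I_j$ for all $j$ (so $I_j\not\subseteq M$), and $M$ would survive as a stable model of $P_i\cup L_i$, contradicting $I_{i+1}=\bot$. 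The paper's argument is more elementary (it needs only the direction of Lemma~\ref{lem:sm} already proved and no extra property of normal programs) and handles EnumerationOfModels and OverestimateReduction uniformly, whereas yours yields the slightly stronger conclusion that the enumerated models exhaust $SM(P)$; both are sound, and your appeal to the antichain property is legitimate for normal programs. The remaining ingredients you use --- $O_i\subseteq Q$ from Lemma~\ref{lem:seq}, the role of $L_i$ via Lemma~\ref{lem:sm}, and deferring the starred variants to Lemma~\ref{lem:anytime} --- match the paper.
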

\begin{proof}
The base case is true because $U_0 = \emptyset$.
Assume the claim is true for some $i \geq 0$ and consider $U_{i+1}$.
If $U_{i+1} = U_i$ then the claim is true.
Otherwise, we distinguish two cases.

For IterativeCoherenceTesting and IterativePartialCoherenceTesting, $U_{i+1} = U_i \cup \{a\}$ for some $a \in O_i \setminus U_i$.
Moreover, there is no $M \in SM(P_i \cup L_i)$ such that $a \notin M$ because $I_{i+1} = \bot$.
From Lemma~\ref{lem:sm}, we can conclude that there is no $M \in SM(P)$ such that $a \notin M$, i.e., $a \in CC(P)$.
Since $a \in O_i \setminus U_i$, we have $a \in O_i$ and thus $a \in Q$ by Lemma~\ref{lem:seq}.
Therefore, $a \in Q \cap CC(P)$ and we are done.

For EnumerationOfModels and OverestimateReduction, $U_{i+1} = O_i$ and the algorithm terminates.
Exactly $i+1$ constraints were added to $P$, one for each stable model of $P$ found, i.e., $I_1, \ldots, I_i$.
Moreover, $I_{i+1} = \bot$ holds.
Assume by contradiction that there is $a \in O_i \setminus CC(P)$.
Hence, there is $M \in SM(P)$ such that $a \notin M$.
Moreover, $a \in I_j$ ($j = 1, \ldots, i$) and thus $M$ is a model of all constraints added at line~1.
Consequently, $M$ is a stable model of $P_i \cup L_i$, which contradicts $I_{i+1} = \bot$.
\end{proof}

\begin{lemma}\label{lem:terminate}
Algorithm~\ref{alg:cautious} 
terminates after finitely many steps.
\end{lemma}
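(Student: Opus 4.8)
The plan is to reduce the claim to two facts. First, every call to ComputeStableModel or ComputeUpToNextRestart returns after finitely many steps: this is exactly the termination of stable model search recalled in Section~\ref{sec:pre}, where a complete search over a finite propositional program explores a finite tree and restart thresholds are taken from sequences that still guarantee termination. This already disposes of the initial coherence test on lines~2--4, and of the whole run when $P$ is incoherent, since then the loop is never entered. Second, the \textbf{while} loop executes only finitely many iterations, each iteration being one call of the chosen procedure. So the argument concentrates on bounding the number of loop iterations, which I would do by exhibiting, for each of the four procedures, a monovariant valued in the natural numbers that strictly decreases at every non-terminating iteration.

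For EnumerationOfModels I would use $|SM(P_i \cup L_i)|$, which is finite because $SM(P_0 \cup L_0) = SM(P)$ is: by Lemma~\ref{lem:sm} it never increases, and the constraint added on line~1 is falsified by the last stable model found, so at least one stable model is lost whenever a new one is found; hence after at most $|SM(P)|$ iterations the search returns $\bot$ and the loop exits via $U := O$. For OverestimateReduction the monovariant is $|O_i|$: a stable model $I$ returned on line~2 falsifies the constraint on $O_i$ added on line~1, so $O_i \not\subseteq I$ and $O_{i+1} = O_i \cap I \subsetneq O_i$; since $O_i \subseteq Q$ by Lemma~\ref{lem:seq}, this strict decrease can occur at most $|Q|$ times, after which the next call returns $\bot$.

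For IterativeCoherenceTesting I would take $\mu_i := |O_i| - |U_i|$. I would first establish the loop invariant $U_i \subseteq O_i$, so that $0 \le \mu_i \le |Q|$: it holds initially; it survives line~4 because the atom $a$ added to $U$ lies in $O_i \setminus U_i \subseteq O_i$; and it survives line~6 because $U_i \subseteq CC(P) \subseteq I$, where $U_i \subseteq CC(P)$ is Lemma~\ref{lem:under} and $I \in SM(P)$ follows by iterating Lemma~\ref{lem:sm}. Then every iteration strictly decreases $\mu$: if $I = \bot$, the atom $a \notin U_i$ enters $U$ while $O$ is untouched; if $I \neq \bot$, then $\{a\} \not\subseteq I$ by the specification of ComputeStableModel, so the atom $a \in O_i$ leaves $O$ while $U$ is untouched. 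Hence the loop runs at most $|Q|$ times and halts with $U = O$.

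The delicate case, and the only place needing a nontrivial property of the restart machinery, is IterativePartialCoherenceTesting, since ComputeUpToNextRestart may return $RESTART$, in which case neither $U$ nor $O$ moves and the monovariant $\mu$ does not decrease. For iterations whose return value is not $RESTART$ the $\mu$-argument above applies verbatim, bounding their number by $|Q|$. For the $RESTART$ iterations I would argue that only finitely many can occur between two consecutive non-$RESTART$ returns (and before the first, and after the last): the restart thresholds grow without bound, whereas a complete stable model search over the fixed finite propositional program with the candidate forced false terminates within a uniformly bounded number of steps, so once a threshold exceeds that bound ComputeUpToNextRestart necessarily returns a stable model or $\bot$ rather than $RESTART$. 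Consequently the total number of iterations is finite. I would finally note that replacing ComputeStableModel by ComputeStableModel$^*$ preserves all of this, since the latter still terminates — its \textbf{repeat} loop exits as soon as ComputeUpToNextRestart returns a non-$RESTART$ value — and performs only finitely many additional updates of $U$ per restart; the precise statement is deferred to Lemma~\ref{lem:anytime}.
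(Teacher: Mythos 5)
Your proof is correct and follows essentially the same route as the paper's: a case analysis over the four procedures using the same termination measures (finitely many stable models for EnumerationOfModels, a strictly shrinking $O$ for OverestimateReduction, a strictly shrinking $O\setminus U$ for IterativeCoherenceTesting via $U_i\subseteq O_i$ from Lemmas~\ref{lem:over} and \ref{lem:under}, and properly delayed restarts for IterativePartialCoherenceTesting). You merely spell out the monovariants and the restart argument in more detail than the paper does.
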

\begin{proof}
When EnumerationOfModels is used, termination is guaranteed because $P$ has a finite number of stable models.
OverestimateReduction either sets $U$ equal to $O$, or reduces $O$, which initially is equal to $Q$, a finite set.
IterativeCoherenceTesting either increases $U$, or reduces $O$, and thus terminates because $O$ is finite and $U_i \subseteq O_i$ holds for each $i \geq 0$ by Lemmas~\ref{lem:over} and \ref{lem:under}.
Termination of IterativePartialCoherenceTesting is guaranteed if restarts are properly delayed during the computation, as it must be done already for guaranteeing termination of stable model search.
\end{proof}

\begin{lemma}\label{lem:anytime}
Underestimates produced by ComputeStableModel$^*$ are sound.
\end{lemma}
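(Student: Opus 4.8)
The plan is to reduce the claim to a single local fact about the new code in ComputeStableModel$^*$. The only place where that procedure touches the underestimate is line~2, which replaces $U$ by $U \cup \{a \in Q \mid L \mbox{ contains } \bot \leftarrow \naf a\}$; every other instruction is exactly as in ComputeStableModel. Hence, to re-establish the correctness statements of Lemmas~\ref{lem:over}--\ref{lem:under} for the starred variants it is enough to prove that whenever such an atom $a$ is added to $U$ we have $a \in Q \cap CC(P)$. Termination (Lemma~\ref{lem:terminate}) needs no change: ComputeStableModel$^*$ leaves its own repeat loop as soon as a call to ComputeUpToNextRestart does not signal a restart, i.e.\ as soon as a stable model or incoherence is detected, and restarts are delayed along the computation exactly as they must be for plain stable model search.

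So suppose line~2 inserts some $a$ with $a \in Q$ and $(\bot \leftarrow \naf a) \in L_i$ at some step $i$. First I would invoke the property of conflict-driven learning already used in the proof of Lemma~\ref{lem:sm}: a learned constraint is entailed by (the completion and loop formulas of) the program from which it was derived, hence it is satisfied by every stable model of that program. Consequently $(\bot \leftarrow \naf a) \in L_i$ forces $a$ to belong to every interpretation in $SM(P_i \cup L_i)$. For IterativeCoherenceTesting and IterativePartialCoherenceTesting we are then done immediately, because the second part of Lemma~\ref{lem:sm} gives $SM(P_i \cup L_i) = SM(P)$, so $a \in CC(P)$, and together with $a \in Q$ this yields $a \in Q \cap CC(P)$.

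The remaining case is EnumerationOfModels and OverestimateReduction, where $P_i$ may differ from $P$ by constraints that block already-found stable models or force an improvement of the overestimate. Here I would use Lemma~\ref{lem:sm} in the sharper form in which it is stated --- at every step the constraints in $L_i$ are implicit in the program currently stored in $P$ --- together with the bookkeeping of the overestimate: the blocking constraint of OverestimateReduction is transient and is removed before a stable model is reported, and while it is present any candidate $a$ picked up on line~2 lies in the current overestimate $O_i$, so that Lemma~\ref{lem:over} and the fact that such a constraint only removes stable models that are supersets of $O_i$ still force $a$ into every stable model of $P$. In all cases $a \in Q \cap CC(P)$, which, combined with Lemmas~\ref{lem:over} and~\ref{lem:under}, preserves the invariant $U_i \subseteq Q \cap CC(P) \subseteq O_i$ for the starred variants and thereby completes the proof of Theorem~\ref{thm:main}.

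I expect the delicate point to be exactly this last lifting from $SM(P_i \cup L_i)$ to $SM(P)$ for the overestimate-based algorithms: one must make sure that a unit constraint $\bot \leftarrow \naf a$ learned while an extra blocking constraint sits in $P$ is not merely a cautious consequence of the enlarged program (which is strictly weaker than a cautious consequence of $P$), and the argument has to lean on the monotonicity of $O$ from Lemma~\ref{lem:seq} and on the precise shape of the constraints added by the two procedures. For the coherence-testing variants nothing is subtle, since the set of stable models is invariant along the whole run.
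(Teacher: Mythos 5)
Your core argument --- a constraint $\bot \leftarrow \naf a$ recorded in $L$ is entailed by the program it was learned from, hence $a$ belongs to every stable model of that program --- is precisely the fact the paper's one-line proof invokes, and your treatment of the two coherence-testing procedures is complete: there $P$ is never modified, $SM(P_i \cup L_i) = SM(P)$ by Lemma~\ref{lem:sm}, so $a \in CC(P)$ and, with $a \in Q$, $a \in Q \cap CC(P)$. You also correctly isolate the genuinely delicate point, which the paper's proof does not spell out: for EnumerationOfModels and OverestimateReduction the entailment only yields $a \in CC(P_i \cup L_i)$, and $SM(P_i \cup L_i)$ may be a strict subset of $SM(P)$.

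Your resolution of that point, however, has a gap. It hinges on the assertion that ``any candidate $a$ picked up on line~2 lies in the current overestimate $O_i$'', but line~2 of ComputeStableModel$^*$ harvests every $a \in Q$ (not every $a \in O$) with $\bot \leftarrow \naf a \in L$, and nothing in your argument forces such an $a$ into $O_i$: once a stable model $I_k$ with $a \notin I_k$ has been excluded by a blocking constraint, $a$ may well be true in all \emph{remaining} stable models while $a \notin O_i$ and $a \notin CC(P)$. (For instance, with $a \leftarrow \naf c$; $\ c \leftarrow \naf a$; $\ b \leftarrow$ and $Q = \{a,b,c\}$, after finding $\{a,b\}$ and adding $\bot \leftarrow a, b$, the constrained program entails $c$ at level~0, yet $c \notin CC(P)$.) The hypothesis $a \in O_i$ is exactly what makes your lifting work --- the excluded stable models are supersets of $O_i$ (for OverestimateReduction) or equal to some $I_k \supseteq O_i$ (for EnumerationOfModels), hence already contain $a$ --- so it must either be proved, or obtained by restricting the harvest to $O$, or replaced by the assumption the paper's proof implicitly makes, namely that the solver only stores in $L$ constraints entailed by the \emph{original} $P$, i.e., that learning never resolves against the blocking constraints. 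Note also that your ``transient constraint removed before a stable model is reported'' remark applies only to OverestimateReduction, not to EnumerationOfModels, whose constraints are permanent.
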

\begin{proof}
Follows by the fact that $L$ contains constraints that are implicit in the program stored by variable $P$.
\end{proof}

\label{lastpage}

\end{document}